\newcommand{\new}[1]{{\color{black}#1}}
\newtheorem{definition}{Definition}
\newtheorem{proposition}{Proposition}
\DeclarePairedDelimiter\ceil{\lceil}{\rceil}
\def\vmu{{\boldsymbol{\mu}}}
\def\vh{{\boldsymbol{h}}}
\def\vu{{\boldsymbol{u}}}
\def\vv{{\boldsymbol{v}}}
\def\vy{{\boldsymbol{y}}}
\def\vz{{\boldsymbol{z}}}
\def\mA{{\boldsymbol{A}}}
\def\mF{{\boldsymbol{F}}}
\def\mH{{\boldsymbol{H}}}
\def\mI{{\boldsymbol{I}}}
\def\mP{{\boldsymbol{P}}}
\def\mS{{\boldsymbol{S}}}
\def\mU{{\boldsymbol{U}}}
\def\mV{{\boldsymbol{V}}}
\def\mW{{\boldsymbol{W}}}
\def\mX{{\boldsymbol{X}}}
\def\mZ{{\boldsymbol{Z}}}
\def\gG{{\mathcal{G}}}
\def\gM{{\mathcal{M}}}
\def\gN{{\mathcal{N}}}
\def\gU{{\mathcal{U}}}
\def\sG{{\mathbb{G}}}
\def\sH{{\mathbb{H}}}
\newcommand{\E}{\mathbb{E}}
\newcommand{\Ls}{\mathcal{L}}
\newcommand{\R}{\mathbb{R}}
\journal{Neural Networks}
\begin{document}

\begin{frontmatter}

\title{Embedding Graphs on Grassmann Manifold}

\author[mymainaddress,mysecondaryaddress]{Bingxin Zhou\fnref{equal}\corref{mycorrespondingauthor}}
\cortext[mycorrespondingauthor]{Corresponding author}
\ead{bzho3923@uni.sydney.edu.au}

\author[mymainaddress]{Xuebin Zheng\fnref{equal}}
\fntext[equal]{Both authors contribute equally}

\author[mysecondaryaddress,mythirdaddress]{Yu Guang Wang}
\author[myfourthaddress]{Ming Li}
\author[mymainaddress]{Junbin Gao}

\address[mymainaddress]{The University of Sydney Business School, The University of Sydney, NSW, Australia}
\address[mysecondaryaddress]{Institute of Natural Sciences and School of Mathematical Sciences, Shanghai Jiao Tong University, Shanghai, China}
\address[mythirdaddress]{School of Mathematics and Statistics, The University of New South Wales, Australia.}
\address[myfourthaddress]{Key Laboratory of Intelligent Education Technology and Application of Zhejiang Province, Zhejiang Normal University, China}

\begin{abstract}
Learning efficient graph representation is the key to \new{favorably} addressing downstream tasks on graphs, such as node or graph property prediction. Given the non-Euclidean structural property of graphs, preserving the original graph data's similarity relationship in the embedded space needs specific tools and a similarity metric. This paper develops \new{a new graph representation learning scheme, namely \textsc{Egg}, which embeds approximated second-order graph characteristics into a Grassmann manifold}. The proposed \new{strategy leverages graph convolutions to learn hidden representations of the corresponding subspace of the graph, which is then mapped} to a Grassmann point of a low dimensional manifold through \new{truncated} singular value decomposition (SVD). \new{The established graph embedding approximates denoised correlationship of node attributes, as implemented in the form of a symmetric matrix space for Euclidean calculation. The effectiveness of \textsc{Egg} is demonstrated using both clustering and classification tasks at the node level and graph level. It outperforms baseline models on various benchmarks.}
\end{abstract}

\begin{keyword}
Grassmann Manifold\sep Graph Neural Network\sep Projection Embedding\sep Subspace Clustering
\end{keyword}

\end{frontmatter}

\section{Introduction}
\label{sec:intro}
\new{Graph neural networks (GNNs), as one of the most prominent avenues in geometric deep learning, have received growing attention over the last few years \cite{bronstein2017geometric,hamilton2020graph,wu2020comprehensive,zhou2020graph,zhang2020deep}. Common to many GNN-based predictive tasks,} distilling key features and structural information from the given graph data \new{stays within the core of designing an effective graph representation learning.}

Graph convolution \cite{BrZaSzLe2013}, \new{especially graph neural message passing \cite{gilmer2017neural},} provides an efficient \new{expression to} the information flow \new{of the underlying graph through aggregating} regional features over the node neighborhood. For a \new{set of multiple graphs of} varying size and \new{topological structure, employing arbitrary graph convolutions fails to coincide the size of graph representation. Instead, an appropriate graph pooling scheme is required to establish graph-level representations of a uniform scale. Furthermore, a handful of pooling strategies has been proposed to scale down the graph embedding by extracting the key components of the graph representation. Depending on whether the hidden attributes are coarsened along the adjacency matrix, different strategies are categorized as either global \cite{wang2020second,xu2018how,zheng2021framelets} or hierarchical pooling \cite{gao2019graph,zheng2020mathnet,ma2019eigen}.}

\new{While each operation designs its unique standard for graph coarsening and feature extraction, common to all pooling strategies is the requirement} of node permutation invariance. \new{When employing feature extraction,} the node order of an undirected graph must not incline the network to \new{excessively concentrate on individual} attributes. \new{T}he permutation invariance \new{property} is intuitive in heuristic pooling operations like summation, averaging, and maximization, where the aggregation rules of \new{regional patterns} do not rely on the \new{arrangement} of nodes \new{sequence. One possibility to define a permutation invariant pooling operation on graph topology is to view nodes of a graph as} a set of elements. \new{For instance, the authors of  \cite{KolouriNaderializadehRohdeHoffmann2021} establish} the Wasserstein embedding of a node set \new{under} the linear optimal transport framework \cite{Memoli2011}\new{. A} proper Wasserstein metric \new{is designed to match pairs of node sets while avoiding a} direct node selection criterion, as \new{the hierarchical} pooling schemes. 

\begin{figure}
    \centering
    \includegraphics[width=\textwidth]{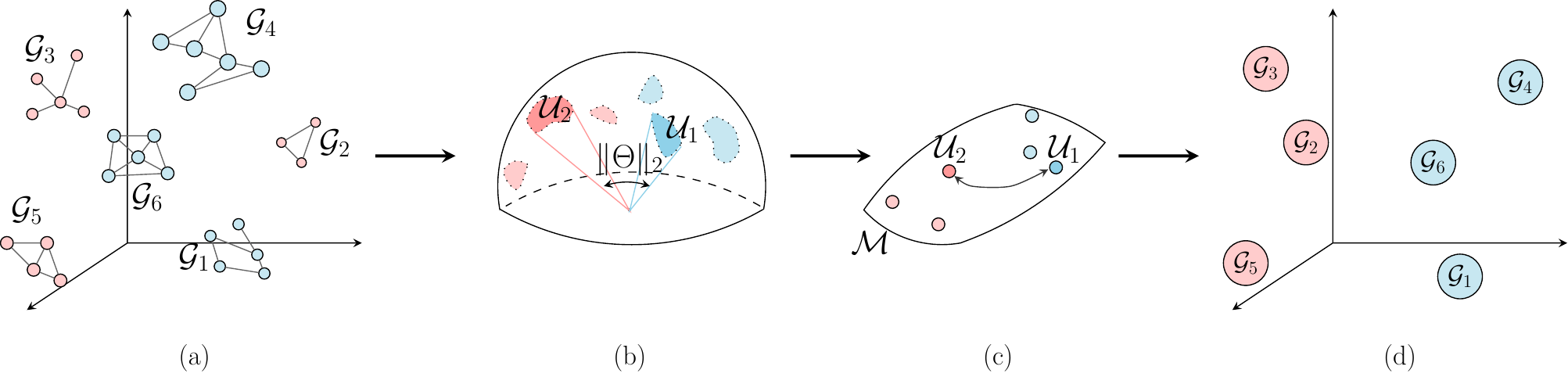}
    \caption{Computational principle of the proposed framework \textsc{Egg}. \new{(a) Given graph $\gG_i$, i.e., a set of nodes, the target is to train a model that assigns a label to each of them. (b)
    Every $\gG_i$ rectifies a subspace of its most representative bit on the manifold space, where their geodesic distance is measured by their principal angle. (c) From the perspective of orthonormal basis, these subspaces can be embedded to the Grassmann points of a Grassmann manifold, where similar points have a small distance. (d) These Grassmann points support an easy projection operation to the space of symmetric matrices for deep learning tasks, such as classification and clustering.}}
    \label{fig:architecture}
\end{figure}

\new{T}his work \new{instead studies the expression of the node set of a graph through} \textit{manifold learning} \new{\cite{AbsilMahonySepulchre2008}}. \new{Recall that the fundamental assumption of a smooth graph in the design of graph convolutions is that spatially connected nodes are likely to share similar characteristics and the nodes from the same class are likely to have similar attributes. Naturally, the hidden representation of an attributed graph creates a subspace of lower dimension, or equivalently, a} point \new{of} a Grassmann manifold. \new{Consequently, a sophisticated} learning task over graphs of \new{varying} size and \new{topology accomplishes its transformation} to a \new{new} learning task \new{over} Grassmann points \new{of} a \new{fixed-dimensional} Grassmann manifold.

We \new{name} th\new{e above} embedding strategy of mapping graph\new{s to} Grassmann \new{points} as \textbf{E}mbedding \textbf{G}raphs on a \textbf{G}rassmann manifold, or \textsc{Egg} for short. \new{This framework is} a generic method to express graphs with a Grassmann manifold subspace analysis\new{. As demonstrated in} Figure~\ref{fig:architecture}\new{, each representation of node sets establishes a subspace of indeterminate dimensions. \textsc{EGG} then embeds these subspaces to Grassmann points of a Grassmannian $\gM$, where every point is explicitly represented by an orthonormal matrix. Furthermore, these Grassmann points support an effortless inversion} to the Euclidean space \new{through symmetric projection}, where \new{new representations} are of the same \new{dimension}, and they \new{are ready for} conventional graph classification \new{or segmentation tasks}. 

\new{In comparison to existing graph distilling strategies, this new} architecture design is one practice of exploiting principal components of graphs from non-linear information transformation. \new{Each node community formulates a subspace of coarsened and smoother higher-level expression. While conventional graph aggregation generally requires stacked convolution or pooling layers to allow non-linear propagation, \textsc{Egg} leverages the truncated singular value decomposition (trSVD) to directly compress the principal components and construct the smooth subspace. In addition, the projected results from these} acquired graph Grassmann embeddings \new{approximates the second-order covariance of node attributes, which gains more expressive power than typical first-order expression from the conventional aggregation and distilling operations. Moreover, the proposed \textsc{Egg} for graph-level tasks guarantees the critical property of permutation invariance, which is an essential requirement of a qualified graph pooling design, yet it has been ignored by many existing methods}. 

The preliminary idea \new{of the developed framework \textsc{Egg}} was \new{first introduced in a workshop paper} \cite{zhou2021grassmann}. \new{This extension provides abundant details for understanding the rationale} and paradigm of the proposed graph embedding scheme. \new{Furthermore, the embedding strategy is expanded from graph pooling applications to more general scenarios, where in this complete work we} exploit \new{the possibility of handling lower-level} unsupervised learning tasks of node \new{segmentation. Additional investigations are addressed to interpret the learned expression and avoid the black-box model design.}

The rest of this paper is organized as follows. \new{Section~\ref{sec:review} reviews the previous literature on graph representation learning and Grassmann deep learning applications.} Section~\ref{sec:preliminary} introduces the \new{Grassmann geometry that is closely related}. Section~\ref{sec:principle} details the two critical ingredients of \new{analyzing subspaces in a Grassmannian}. We then demonstrate our methods with two specific applications: graph classification and node clustering. \new{The problems are formulated} in Section~\ref{sec:applications}\new{, and the empirical performances are reported in} Section~\ref{sec:exp}. \new{Further investigations on} the significance of \textsc{Egg} \new{are addressed in} Section~\ref{sec:exp++}.

\new{
\section{Related Work}
\label{sec:review}
An attributed} undirected graph is denoted as $\gG_i = (V_i, E_i, \mX_i)$ of $n_i:=|V_i|$ nodes and $|E_i|$ edges. The node is featured by $\mX_i\in\R^{n_i\times d}$, and the (weighted) edges 
for the structure information are described by an adjacency matrix $\mA\in\R^{n_i\times n_i}$. \new{Since the graph topology provides additional information, \emph{graph representation learning} aims at encoding such a structural expression to conventional vector representations for deep learning models that assign labels to instances.} A node-level graph learning task assigns a label $\{\vy_i\}$ to each node of the graph $\gG_i$, while a graph-level task finds a sequence of $N$ labels $\{\vy_1, \vy_2, \cdots, \vy_N\}$ from a set of input graphs $\sG=\{\gG_1,\dots,\gG_N\}$. Depending on the nature of the assigned labels, the learning task can be categorized to either regression or classification.

\new{
\paragraph{Spatial Graph Convolution}
The emerging development of graph neural networks (GNNs) generates enormous work for graph representation learning. Typically, the topological embedding is realized by graph convolutional layers. A spatial-based propagation rule \cite{gilmer2017neural} leverages proper feature extraction and aggregation from the central node's local community or the neighborhood. The propagation rule can be designed as flexible as weighted average \cite{kipf2016semi,he2020lightgcn,bo2021beyond}, concatenation \cite{hamilton2017inductive,xu2018representation}, learnable attention \cite{velivckovic2017graph,wang2019heterogeneous,kim2021how}, or other adaptive choices \cite{brockschmidt2020gnn,tailor2021we}. To allow a broader receptive field, multiple convolution layers are frequently stacked for multi-hop neighborhood aggregation. Nevertheless, the majority of the aggregation rules are carried out in the first-order space, which omits the second-order covariance information that can capture insightful non-linear relationships of the feature attributes \cite{lin2015bilinear,wang2017spatiotemporal}.

\paragraph{Graph Pooling and Down-sampling}
As a graph-level learning task involves multiple graphs of a diverse number of nodes, it is crucial for GNNs to unify the dimension of the output graph representation by pooling operations. For instance, \textsc{TopKPool} \cite{cangea2018towards} formulates a score function to rank graph nodes and picks the parent nodes of subgraphs from graph clusters to hierarchically coarsen a graph. Other research enhances the selection efficacy through a carefully designed scoring mechanism, such as multilayer perceptron (MLP) \cite{lee2019self} or attention \cite{li2016gated,knyazev2019understanding}. The authors of \cite{gao2019graph,zheng2020mathnet,ma2019eigen} extended graph coarsening conventions to different slicing principles, although the role of graph clustering and its influence on local pooling has been challenged by the literature \cite{wang2020second,mesquita2020rethinking}. Alternatively, global graph pooling strategies are pumped out in practice with a simpler design and comparable performance \cite{xu2018how,zheng2021framelets}.

\paragraph{Grassmann Manifold in Deep Learning}
Grassmann manifolds play an important role in recommender systems \cite{dai2012geometric,boumal2015low}, computer vision \cite{lui2012advances,minh2016algorithmic} and pattern recognition \cite{huang2015projection,slama2015accurate}. Grassmann learning exploits subspace-invariant features and harnesses the structural information of sample sets, which improves the prediction performance of a model with lower complexity and higher robustness. Due to these privileges, the Grassmann manifold is often approached as a tool of nonlinear dimensionality reduction \cite{koch2007dynamical,ngo2012scaled,dong2014clustering,zhou2019manifold} or optimization objectives \cite{edelman1998geometry,absil2004riemannian}. While direct computations on a Grassmannian can be sophisticated, other research investigates pipeline Grassmann points to} a Grassmann learning algorithm, such as Deep Grassmann Networks \cite{HuangWuGool2016} and Grassmann clustering \cite{WangHuGaoSunYin2014,WangHuGaoSunChenAliYin2017}.

\section{\new{Grassmann Geometry}}
\label{sec:preliminary}
This section overviews the mathematical formulation of the Grassmann points and Grassmann manifold. \new{We also discuss the measurement on} the geodesic distance \new{for Grassmann points, as well as their} Euclidean counterparts.

\subsection{Grassmann Manifold}
Grassmann manifold is a manifold of matrices of a specific rank. Each \new{Grassmann} point \new{is represented by an orthonormal matrix, and} it corresponds to a subspace of the underlying real Euclidean space. To say it precisely,

\begin{definition}[Grassmann Manifold \cite{AbsilMahonySepulchre2008}]
    \new{The} Grassmann manifold \new{$\gM(p,m)$} ($p\leq m$) consists of all $p$-dimensional subspaces \new{of the Euclidean space} $\R^m$\new{, i.e.,}
    $$\gM(p,m)=\{\gU\subset\R^{m}: \gU\text{ is a subspace, } \operatorname{dim}(\gU)=p\}.$$
\end{definition}

A \new{particular} Grassmann point $\gU\in\gM(p,m)$ \new{is identified by an orthonormal matrix $\mU\in\R^{m\times p}$, which is an equivalence class of all rank-$p$ matrices that spans $\gU$. That is,} $\gM(p,m)=\bigl\{\mathrm{span}(\mU):\mU\in\R^{m\times p}, \mU^{\top}\mU=\mathbb{I}_p\bigr\}$. \new{Furthermore, this subspace is identified uniquely by a projector $\Pi(\mU)$ on $\gU$, such that $\Pi(\mU) =\mU \mU^{\top}$.}
The Grassmann manifold is an abstract quotient manifold that one can represent in many ways, such as the Lie group theory \cite{bendokat2020grassmann,gallivan2003efficient}. \new{To best allow convenient algebraic calculation, t}his work constructs Grassmann points from the perspective of projection matrices \cite{chikuse2003statistics}.

\subsection{Subspace Distance}
The distance between \new{two} Grassmann points is measured differently \new{from} the conventional Euclidean metric \new{due to the curvature of the Grassmannian. The} \emph{geodesic distance} \new{is thus defined as} the length of the shortest path along the manifold between two points\new{, which is a function of the principal angles of the two subspaces or analogously the two Grassmann points, as we introduce now}. 

\begin{definition}[Principal angle]
    Given two Grassmann points $\gU_1,\gU_2\in\gM(p,m)$ and their orthonormal bases $\mU_1=[(\vu_1)_1, \cdots, (\vu_1)_p]$, $\mU_2=[(\vu_2)_1,\cdots,(\vu_2)_p]\in \R^{m\times p}$, we define their principal angles $0\leq\theta_1\leq\cdots\leq\theta_p\leq\frac{\pi}{2}$ recursively by 
    \begin{align*}
    &cos(\theta_i)=\max (\vu_1)_i^{\top}(\vu_2)_i \\
    \text{s.t.}\;\; &\|\vu_1\|_2=\|\vu_2\|_2=1,\quad (\vu_1)_i^{\top}(\vu_1)_j=(\vu_2)_i^{\top}(\vu_2)_j=0 \quad \forall j<i.        
    \end{align*}
\end{definition}
The principal angles describe the smallest $p$ angles between all possible bases of the two $p$-dimensional subspaces ($\mU_1$ and $\mU_2$). With a sequence of principal angles $\Theta=[\theta_1,\cdots,\theta_p]$, the geodesic distance between the two Grassmann points is a function of the principal angles, i.e., $d(\mU_1,\mU_2)=\|\Theta\|_2$.

In literature, there exist many other measurements to describe the discrepancy between subspaces, so that a closed-form solution for optimization on Grassmann manifold becomes possible. For example, the \emph{projection distance} \cite{chikuse2003statistics} embeds a Grassmann manifold $\gM(p,m)$ into a higher $m$ dimensional Euclidean space in the form of Symmetric Positive-Definite (SPD) matrix; the chordal distance and the Procrustes distance \cite{ye2016schubert} measures the total squared sine angle between Grassmann points, which is usually used for shape analysis. This work \new{follows} the first measure of the projection distance, which also supports kernelized Grassmann learning \cite{harandi2014expanding} and has been well-explored in learning low-rank approximation \cite{ngo2012scaled,dong2014clustering,zhou2019manifold} and pattern recognition \cite{huang2015projection,slama2015accurate}.

\section{\new{Grassmannian Subspace Analysis on Graphs}}
\label{sec:principle}
This section provides a guideline for \new{graph smoothing and distilling through node sets embedding to a continuous and smooth Grassmann manifold. As we shall introduce below, our proposed method rectifies graph representations with embedded} structure information to a Grassmann\new{ian} of their feature space \new{at a lower dimension}. The \new{Euclidean representations of these Grassmann instances from the projection perspective can be considered as an approximate version of the} feature \new{correlations that eliminates unnecessary variances. A Grassmann embedding appends non-linear smoothing effects to the graph representations that are usually achieved by stacking up fully-connected layers. The output representation meets the key requirements of a standard graph embedding scheme, and it allows arbitrary computations adapted to a Euclidean space.}

\subsection{\new{Problem Formulation}}
We begin with a set of hidden representations $\sH=\{\mH_1,\dots,\mH_N\}$ from graph convolutional layers for a given set of $N$ graphs $\sG=\{\gG_1,\dots,\gG_N\}$. Here $\mH_i\in\R^{n_i\times m}$ is with respect to $n_i$ nodes in $\gG_i$ and \new{a number of} $m$ hidden \new{neurons for} the last graph convolution \new{operation}. The $\mH_i$ for graph-level learning tasks requires a graph representation that is irrelevant to the node size and has an identical dimension to other $\mH\in\sH$. \new{Therefore, at the rectification step,} each graph representation of $\sH$ to a Grassmann point \new{is identified with an orthonormal basis of $\mH$, and is aligned to} the same Grassmannian. 

\subsection{Manifold Rectification}
Suppose we have obtained a graph hidden representation $\mH\in\sH\subset\R^{n\times m}$ \new{as well as} its row-generated subspace $\text{span}(\mH^{\top})$, which as mentioned can be achieved by \new{employing} one or multiple layers of graph convolution. \new{Our target is} to find a concrete Grassmann representation \new{with} most variations of the data. \new{We characterize the representation as} an orthogonal basis of the subspace, \new{which} can be rectified in several ways, \new{such as the} QR decomposition of matrix $\mH^{\top}$, as demonstrated in \cite{HuangWuGool2016}. Here we consider another classic method of the truncated singular value decomposition (SVD) to find the best low-rank approximation of the hidden feature space $\mH^{\top}$ in the sense of the least-squares \cite{chikuse2003statistics}. 

\new{The} preliminary goal of \new{employing} the manifold embedding is to \new{establish a graph representation of a} unite dimension $k$. \new{We therefore leverage the truncated SVD on $\mH$ to obtain} the most representative basis $\mU=[\vu_1, \vu_2,\dots,\vu_p]$ \new{of the subspace $\text{span}(\mH^{\top})$, i.e.,}
\begin{equation}\label{eqn:svd}
    \mH^{\top} =\mU \mS \mV^{\top},
\end{equation}
\new{where the Grassmann point $\gU = [\mU]$ is an equivalence class of $\mU$}. The $\mU\in\R^{m\times k}$ \new{is an orthonormal basis} with $\text{rank}(\mH^{\top})=k, 1\leq k\leq \min\{m,n\}$. The diagonal $\mS:=\text{Diag}([\sigma_1,\dots,\sigma_k]) \in\R^{k\times k}$ contains $k$ singular values sorted in the descending order, where \new{$\sigma_l$ gives} the percentage importance of \new{$\vu_l$}. The corresponding singular vectors constitute $\mV=[\vv_1, \dots, \vv_k]\in\R^{n\times k}$.

\new{Rather than using the full $\mU\in\R^{m\times k}$, we only preserve the} first $p$-columns of $\mU$ ($p\leq k$)\new{, denoted as $\mU_p$, to include} the most important \new{$p$} components of the original space $\mH$. \new{T}he subspace \new{span($\mU_p$)} composes a Grassmann\new{ian} $\gU:=[\mU_p]$ 
\new{in} $\gM(p,m)$. \new{In practice, it could potentially hurt the expressiveness of graph embedding for defining an identical relatively small subspace dimension $p$ for all graphs, since real-world datasets could have a great number of graphs with a large variation on node sizes from a few to thousands. Instead, we let $p$ for a Grassmann point $[\mU_d]\in\gM(p,m)$ be determined by $p=\sum_{i=1}^k\mathbbm{1}\{\sigma_i > r\}$. The $\sigma_i$ corresponds to unit singular values from \eqref{eqn:svd}, and $r$ denotes the global threshold of the percentage importance. However, all these Grassmann points $\{\gU\}$ can be naturally mapped to the embedded Grassmannian $\gM(p_{\text{max}},m)$, with $p_{\text{max}}$ the highest $p$ over all Grassmann points, so they are still at the same space with accessible geodesic distance.}

\new{The} embedding operation \new{is compatible to arbitrary} hidden representation\new{s of} $\sH$. \new{Intuitively, it} regards a \new{(sub)}graph with a set of $n$ \new{attributed} nodes as a $p$-dimensional subspace. \new{While a variant of size $n$ can be observed from different node sets, the rectified Grassmann points from the underlying embedding operation are projected to the same} Grassmann manifold, where \new{the geodesic distance of two points reflects the similarity of two sets, or graph instances in analogue.} Such similarity provides a criterion for \new{distance-based training tasks, such as} clustering or classification.

The rectification step \new{by the truncated SVD} can be considered as a non-linear transformation of the feature space that extracts the most \new{powerful subspace expression of the node space} and \new{view} it as a Grassmann point. \new{At this stage, node sets of varying size are embedded to a common Grassmann manifold, where each of them is represented by a subspace of orthonormal basis $\mU_p\in \mathbb{R}^{m\times p}$. While it is feasible to compute the geodesic distance of Grassmann points, projecting them back to the Euclidean space is preferred by conventional deep learning modules. We now introduce the projection operation of a Grassmann point to its associated Euclidean representation.}

\subsection{Projection Embedding}
\new{The rectification step establishes} a set of Grassmann points $\{\gU_1,\dots,\gU_N\}\subset\gM(p,m)$ \new{as well as their matrix representation $\{\mU_1,\dots,\mU_N\}$ from the orthonormal basis perspective. One can establish} follow-up learning \new{schemes} on $\{\gU_1,\dots,\gU_N\}$ \new{with the} Grassmann geometry \cite{AbsilMahonySepulchre2008}. \new{Alternatively, the} Grassmann \new{points can first be projected} to the space of symmetric matrices $\text{Sym}(m)$ \new{to allow calculations based on Euclidean space. Define}
\begin{equation}
\Pi: \gM(p,m) \longrightarrow \text{Sym}(m), \quad \Pi(\mU) = \mU \mU^{\top}. 
\label{Eq:Embedding}
\end{equation} 
The projected representation $\text{Sym}(m)$ allows general Euclidean measures \new{to allow} conventional deep learning methods \new{such as fully-connected layers}.

\new{After the rectification and the projection step,} the graph latent representation \new{$\mH$ is transformed} to \new{a} symmetric positive definite (SPD) matrix $\Pi(\mU_p) =\mU_p \mU_p^{\top}$. Th\new{is} SPD matrix representation \new{is} an analog to a bilinear mapping, \new{and it} captures the second-order statistics that better reflect\new{s} regional features \new{of $\mH$} \cite{tuzel2006region}. Moreover, the rectified representation $\mU$ from \eqref{eqn:svd} \new{gains robustness as a result of approximating the} covariance matrix $\mU_p \mU_p^{\top}$. This projected Euclidean representation \new{is feasible} for various tasks. For example, in graph property prediction, \new{a vectorized $\Pi(\mU_p)$ can be employed} as the readout \new{train. We summarize the main steps of the Grassmann embedding in Algorithm~\ref{alg:egg}}.

\begin{algorithm}[t]
    \hsize=\textwidth
    \SetKwData{step}{Step}
    \SetKwInOut{Input}{Input}\SetKwInOut{Output}{Output}
    \BlankLine
    \Input{Hidden representation $\mH$, dimension $p$.}
    \Output{A graph representation.}
    (for graph-level tasks): Transpose $\mH$ to $\mH^{\top}$. \tcp*[f]{Transpose\hspace{6mm}}\\
    Find the $p$-dimensional low-rank representation of $\gU$ by \eqref{eqn:svd}.\\ \tcp*[f]{Manifold Rectification\hspace{6mm}}\\
    Project $\gU$ to an SPD matrix \eqref{Eq:Embedding}.\\ \tcp*[f]{Projection Embedding\hspace{6mm}}\\
    \caption{\textsc{Egg} for graph representation learning}
    \label{alg:egg}
    \end{algorithm}

\subsection{Stable SVD for Backward Propagation}
\new{While we package the embedding operations in an end-to-end learning framework, it is essential to develop a computational strategy for SVD that is reliable in} back-propagation (BP) \new{of} deep neural networks.
\new{The rest of this section} give\new{s} the derivation of BP for the employed truncated SVD\new{, which} is numerically stable especially in the case when the input matrix \new{involves} extremely small singular values. \new{We denote} two orthonormal matrices $\mU\in\R^{m\times k}$, $\mV\in\R^{n\times k}$, and $\mS=\mathrm{diag}(s_1, s_2,\ldots,s_k)\in\R^{k\times k}$ \new{as the output of} SVD on $H^{\top}$ during forward-propagation. \new{To update $H^{\top}$ in BP, its gradient is calculated by}
\begin{align*}
    \nabla_{\mH^{\top}}f=
    &\left[\mU\left(\mF\circ\left[\mU^{\top}\overline{\mU}-\overline{\mU}^{\top}\mU\right]\right)\mS+\left(\mI_m-\mU\mU^{\top}\right)\overline{\mU}\mS^{-1}\right]\mV^{\top}\\
    &+\mU\left[\mS\left(\mF\circ\left[\mV^{\top}\overline{\mV}-\overline{\mV}^{\top}\mV\right]\right)\mV^{\top}+\mS^{-1}\overline{\mV}^{\top}\left(\mI_n-\mV\mV^{\top}\right)\right]\\
    &+\mU\left(\mI_k\circ\overline{\mS}\right)\mV^{\top},
\end{align*}
where $\mF_{ij}=\frac{1}{s_j^2-s_i^2}\cdot\mathbbm{1}\{i\neq j\}$ that satisfies the identity $\mF^{\top}=-\mF$. The calculations of $\mF$ and $\mS^{-1}$ are often numerically unstable due to the possible near-zero singular values. To circumvent this difficulty, we \new{follow the authors of }\cite{huang2017riemannian} \new{and introduce the following trick on $\mS$}:
$$
\mS_{i, i}^{\mathrm{new}}=\mS_{i, i}\cdot\mathbbm{1}\{\mS_{i, i}>\epsilon\}+\epsilon\cdot\mathbbm{1}\{\mS_{i, i}\leq\epsilon\},
$$
where $\epsilon$ is a small number and can usually be set to $10^{-12}$. In practice, we replace $\mS$ by the modified matrix $\mS^{\mathrm{new}}$ in BP \new{to avoid $0$ values in $\mS$}.

\section{Applications on Graph}
\label{sec:applications}
\new{T}his section appl\new{ies} the proposed \textsc{Egg} to two \new{distinct graph learning} tasks: node clustering and graph classification. We \new{start with formulating} the two problems to be solved, \new{following the designed} model structure \new{of} the two addressed problems.

\subsection{Graph Classification}
\new{A graph-level representation learning task, such as graph classification and regression, takes multiple graphs} $\sG=\{\gG_1,\dots,\gG_N\}$ \new{as the input to train a feasible model that} makes correct assignment $\vy_i=g(\mX_i,\mA_i)$. \new{T}he node sizes of different graphs are merely identical, \new{and} it is \new{the duty of} graph pooling to learn from a hidden graph representation $\mH_i=f_1(\mX_i,\mA_i)$ so that \new{the} graph is summarized to $\vh_i=f_2(\mH_i)$ with a determined length. Th\new{e} representation \new{is later employed for} label prediction, i.e., $\vy_i=f_3(\vh_i)$.

\new{With GNNs, t}he first step of $f_1(\cdot)$ is usually executed by graph convolutional layers to extract structure and node features, which outputs a hidden representation $\mH\in\R^{n\times m}$ for an arbitrary graph of $n$ nodes. \new{A graph pooling strategy is then selected to design a proper $f_2(\cdot)$ that unifies the dimension of representations to all the graphs.} The proposed \textsc{Egg} defines $f_2(\cdot)$ by embedding each $\mH$ to a Grassmann point of $\gM(p,m)$. Specifically, \new{it} calculate\new{s} a $k$-dimensional subspace of $\mH^{\top}$ by \new{the} truncated SVD. The $\mU$ from \eqref{eqn:svd} \new{denotes} a Grassmann point \new{associated with} a graph. \new{To allow Euclidean computations,} it can be projected to a symmetric matrix by \eqref{Eq:Embedding} \new{and} be flattened and sent to a classifier, such as fully-connected layers. Th\new{e} embedding step is also illustrated in Fig\new{ure}~\ref{fig:architecture} and Algorithm~\ref{alg:egg}.

The validity of \new{\textsc{Egg}} for graph pooling is guaranteed by the two essential requirements: \emph{uni-size representation} and \emph{permutation invariance}, which we shall check now.

\begin{proposition}
\textsc{Egg} always produces a graph embedding $g\in\R^{\frac{m(m+1)}{2}}$ for the node representation matrix $\mH\in\R^{n\times m}$, regardless of the graph size $n$.
\end{proposition}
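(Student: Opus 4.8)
The plan is to trace the three steps of Algorithm~\ref{alg:egg} and track the size of each intermediate object, confirming that the only index depending on $n$ is eliminated before the readout. First I would observe that in the truncated SVD \eqref{eqn:svd}, $\mH^\top=\mU\mS\mV^\top$ with $\mH^\top\in\R^{m\times n}$, the left-singular matrix $\mU$ has exactly $m$ rows by construction; hence the truncated basis $\mU_p$ formed from its first $p$ columns lies in $\R^{m\times p}$ for any admissible $p\le k\le\min\{m,n\}$. The value of $p$ may vary from graph to graph (it is set by the singular-value threshold $r$, or one may further embed into $\gM(p_{\max},m)$), but this is harmless: the projection \eqref{Eq:Embedding} returns $\Pi(\mU_p)=\mU_p\mU_p^\top\in\R^{m\times m}$, whose dimensions depend only on the fixed hidden width $m$ and not on $n$ or $p$. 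This already secures the claimed independence from the graph size.

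Next I would check that $\Pi(\mU_p)$ lands in $\text{Sym}(m)$: clearly $(\mU_p\mU_p^\top)^\top=\mU_p\mU_p^\top$, so it is a symmetric $m\times m$ matrix. The linear space $\text{Sym}(m)\subset\R^{m\times m}$ has dimension $\tfrac{m(m+1)}{2}$, since a symmetric matrix is determined exactly by its $m$ diagonal entries together with its $\binom{m}{2}$ strictly-upper-triangular entries, and $m+\binom{m}{2}=\tfrac{m(m+1)}{2}$. Finally, the readout flattens $\Pi(\mU_p)$ into a vector; here I would be explicit that this flattening is the half-vectorization recording each independent entry once, which is a linear isomorphism $\text{Sym}(m)\cong\R^{m(m+1)/2}$, so the resulting embedding satisfies $g\in\R^{m(m+1)/2}$ for every input $\mH$.

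The argument is purely dimensional bookkeeping, so I do not expect a genuine obstacle. The single point that requires care is that the flattening in the readout must respect the symmetry of $\Pi(\mU_p)$ — a naive full vectorization would instead give $\R^{m^2}$ — and one should note that the per-graph variability of $p$ does not propagate to the output dimension, being absorbed entirely into the rank of the fixed-size matrix $\mU_p\mU_p^\top$.
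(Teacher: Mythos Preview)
Your proposal is correct and follows essentially the same approach as the paper: the paper's proof likewise observes that $\mU\in\R^{m\times k}$ so that $\mU\mU^\top\in\R^{m\times m}$, and then takes the flattened upper-triangular part to obtain a vector of length $m(m+1)/2$ independent of $n$. Your version is more explicit about the symmetry and the half-vectorization isomorphism, but the argument is the same dimensional bookkeeping.
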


\begin{proof}
Given the node representation matrix $\mH\in\R^{n\times m}$ of a graph $\mathcal{G}$ with $n$ nodes and $m$ features, the Grassmann graph embedding gives the output $\mU\mU^{\top}$, where $\mU\in\R^{m\times k}$ with $k=\text{rank}(\mH^{\top})$. Then, the output of \textsc{Egg} for the graph classification is the flattened representation of the upper triangular matrix of $\mU\mU^{\top}\in\R^{m\times m}$\new{. The length of vector representation is thus ${m(m+1)}/{2}$, and it} is independent of the graph size $n$.
\end{proof}

\begin{proposition}
\textsc{Egg} satisfies the requirement of permutation invariance so that it produces the same Grassmann graph embedding under row permutations of the input node representation matrix.
\end{proposition}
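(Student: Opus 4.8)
The plan is to reduce a row permutation of $\mH$ to a right multiplication of $\mH^{\top}$ by an orthogonal matrix and then observe that such a multiplication leaves the left singular vectors and singular values of $\mH^{\top}$ untouched, hence leaves every downstream quantity of Algorithm~\ref{alg:egg} unchanged. Concretely, let $\mP\in\R^{n\times n}$ be the permutation matrix encoding the row reordering, so the permuted node representation is $\mP\mH$ and its transpose is $\mH^{\top}\mP^{\top}$. Starting from the truncated SVD $\mH^{\top}=\mU\mS\mV^{\top}$ of \eqref{eqn:svd}, we get $\mH^{\top}\mP^{\top}=\mU\mS(\mP\mV)^{\top}$, and since $\mP^{\top}\mP=\mI_n$ the matrix $\mP\mV$ still has orthonormal columns and the same rank. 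Thus $\mU\mS(\mP\mV)^{\top}$ is a valid SVD of $\mH^{\top}\mP^{\top}$ with exactly the same $\mU$ and $\mS$; only the right factor changes from $\mV$ to $\mP\mV$.

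From there the remaining steps are routine: the ordered singular values in $\mS$ are unchanged, so the truncation index $p=\sum_{i=1}^{k}\mathbbm{1}\{\sigma_i>r\}$ is unchanged, the retained basis $\mU_p$ (the first $p$ columns of $\mU$) is unchanged, the Grassmann point $[\mU_p]\in\gM(p,m)$ is the same subspace, and therefore the projected embedding $\Pi(\mU_p)=\mU_p\mU_p^{\top}\in\mathrm{Sym}(m)$ together with its vectorized upper-triangular part is identical for $\mH$ and $\mP\mH$. Hence \textsc{Egg} returns the same output, which is the claim.

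The one genuine subtlety, and the step I would treat most carefully, is the non-uniqueness of the SVD: when $\mH^{\top}$ has repeated singular values, $\mU$ is determined only up to an orthogonal rotation inside each singular subspace, so one cannot literally speak of ``the same $\mU$.'' The clean fix is to work with the Gram matrix: since $(\mP\mH)^{\top}(\mP\mH)=\mH^{\top}\mP^{\top}\mP\mH=\mH^{\top}\mH$, the matrix $\mH^{\top}\mH$ is itself permutation invariant, its eigenvalues are the $\sigma_i^2$, and its eigenvectors are the columns of $\mU$. The object actually entering the embedding, $\Pi(\mU_p)=\mU_p\mU_p^{\top}$, is precisely the orthogonal projector onto the span of the top-$p$ eigenvectors of $\mH^{\top}\mH$, which is a function of $\mH^{\top}\mH$ alone: it is well defined as soon as $\sigma_p\neq\sigma_{p+1}$, and $p$ is well defined whenever $r$ differs from every $\sigma_i$. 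Therefore $\Pi(\mU_p)$, and the final flattened vector, depends only on the permutation-invariant matrix $\mH^{\top}\mH$, which establishes permutation invariance without committing to any particular choice of singular vectors; phrasing the argument this way also makes clear that the conclusion is independent of the numerical stabilization of the SVD used in back-propagation.
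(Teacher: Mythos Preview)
Your argument is correct and follows essentially the same route as the paper: write $(\mP\mH)^{\top}=\mH^{\top}\mP^{\top}=\mU\mS(\mP\mV)^{\top}$ and observe that the left factor $\mU$ is unchanged, so the Grassmann point and the projection $\mU_p\mU_p^{\top}$ are unaffected. Your treatment is in fact more careful than the paper's, which simply asserts that ``the Grassmann point for both $\mH_1$ and $\mH_2$ can be accessed by the same matrix $\mU$'' without addressing the non-uniqueness of the SVD; your Gram-matrix reformulation via $(\mP\mH)^{\top}(\mP\mH)=\mH^{\top}\mH$ cleanly closes that gap.
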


\begin{proof}
Suppose $\mH_1$ is the node representation matrix of a graph and let $\mH_2=\mP\mH_1$, where $\mP$ is a permutation matrix. Then,
\begin{equation*}
    \mH_1^{\top} = \mU \mS \mV^{\top},\quad
    \mH_2^{\top} = \mH_1^{\top}\mP^{\top} =\mU\mS\mV^{\top}\mP^{\top}.
\end{equation*}
The Grassmann point for both $\mH_1$ and $\mH_2$ can be accessed by the same matrix $\mU$. Hence, the proposed graph embedding method is permutation invariant.
\end{proof}

\subsection{Node Clustering}
\label{sec:app:clustering}
Node-level tasks make predictions on each node $v$ of a \new{single} graph $\gG$. For node clustering, the target is to segment the full graph to a number of subsets, where nodes from the same subset usually have closer connection to each other, and/or \new{share} similar \new{properties}. 

A typical approach in literature to solve node clustering problems is by \new{using} \emph{Variational Graph Auto-Encoder} (\textsc{VGAE}) \cite{kipf2016variational} \new{to generate a latent representation for a graph and then applying $k$-means \cite{lloyd1982least} for clustering the nodes} , where the \new{\textsc{VGAE}} \new{model pursues the} optimal variational parameters $\mW$ that minimize the variational lower bound
\begin{equation*}
\Ls=\E_{q(\mZ|\mX,\mA)}\bigl[\log p(\mA|\mZ)]-\mathrm{KL}[q(\mZ|\mX,\mA) \| p(\mZ)\bigr].
\end{equation*}
Here $q(\mZ|\mX,\mA)$ is the encoder such that
\begin{equation*}
q(\mZ|\mX,\mA)=\prod_{i=1}^N q(\vz_i|\mX,\mA), \quad
q(\vz_{i}|\mX,\mA)\sim\gN(\vz_i|\vmu_i,\operatorname{diag}(\boldsymbol{\sigma}_i^2)).
\end{equation*}
Both \new{the} mean $\vmu$ and log standard deviation $\log\boldsymbol{\sigma}$ \new{are} approximated by graph convolutional layers, such as \textsc{GCN} \cite{kipf2016semi}. \new{T}he decoder is defined as
\begin{equation*}
p(\mA|\mZ)=\prod_{i=1}^N \prod_{j=1}^N p(A_{i j}|\vz_i, \vz_j),\quad
p(A_{i j}=1|\vz_i, \vz_j)=\sigma(\vz_i^{\top} \vz_j),
\end{equation*}
where $\mA$ is with respect to the adjacency matrix, and $\sigma(\cdot)$ is the activation function. We take the inference set $\mH := [\vmu_1,\cdots,\vmu_m]\in\R^{n\times m}$ as the hidden representation of the graph \new{with} $n$ node\new{s of} hidden size $m$ \new{ and send it} to $\textsc{Egg}$ for Grassmann embedding\new{. T}he consequent $\mU\mU^{\top}\in\R^{n\times n}$ is handled by clustering methods, such as $k$-means \cite{lloyd1982least}, to \new{assign} clusters \new{with $k$ set to be the number of classes of the dataset}. Instead of relying on the pair-wise connection or the feature-space \new{information}, $\textsc{Egg}$ \new{leverages} second-order \new{correlations} of nodes \new{for a proper} segmentation.

\section{Experiment}
\label{sec:exp}
This section evaluates the proposed \new{framework} on graph\new{-level} classification tasks \new{as well as node-level clustering tasks. The former is conducted on six benchmarks of variant} graph sizes, volume, and density\new{, and the latter employs five popular graph datasets of moderate volume. All benchmark datasets and baseline methods are publicly available in the \texttt{PyTorch Geometric} (\texttt{PYG}) \cite{fey2019fast} library. The implementation of \textsc{Egg} is published at \hyperlink{https://github.com/conf20/Egg}{\textcolor{blue}{https://github.com/conf20/Egg}}.
The rest of this section lists the experimental setup and analyzes the performance comparison of the two experiments.}

\subsection{Ablation Study on Graph Pooling}
\label{sec:expPool}

\begin{table}[h]
\caption{Summary of the datasets for graph property prediction tasks.}
\label{table:graph descriptive}
\begin{center}
\resizebox{0.9\linewidth}{!}{
\begin{tabular}{lrrrrrrrr}
\hline
\textbf{Datasets} & \textbf{Proteins} & \textbf{D\&D} & \textbf{NCI1} & \textbf{Mutagen} & \textbf{Collab} & \textbf{Molhiv} \\ \hline
\# graphs & $1,113$ & $1,178$ & $4,110$ & $4,337$ & $5,000$ & $41,127$ \\
\# classes & $2$ & $2$ & $2$ & $2$ & $3$ & $2$ \\
Min \# nodes & $4$ & $30$ & $3$ & $30$ & $32$ & $2$ \\
Max \# Nodes & $620$ & $5,748$ & $111$ & $417$ & $492$ & $222$ \\
Avg \# nodes & $39$ & $284$ & $30$ & $30$ & $74$ & $26$ \\
Avg \# edges & $73$ & $716$ & $32$ & $31$ & $2,458$ & $28$ \\
\# Features  & $3$ & $89$ & $37$ & $14$ & $0$  & $9$ \\ \hline
\end{tabular}
}
\end{center}
\end{table}

\begin{table}[t]
\caption{Performance comparison for graph \new{classification with \textbf{\textsc{GCN} convolution}}. }
\label{table:poolgcn}
\begin{center}
\resizebox{\textwidth}{!}{%
\begin{tabular}{lrrrrrrr} \hline
& \textbf{Proteins} & \textbf{D\&D} & \textbf{NCI1} & \textbf{Mutagen} & \textbf{Collab} & \textbf{Molhiv} \\ \hline 
TopKPool  & $73.48${\scriptsize $\pm3.57$} & $74.87${\scriptsize $\pm4.12$} & $75.11${\scriptsize $\pm3.45$} & $79.84${\scriptsize $\pm2.46$} & $81.18${\scriptsize $\pm0.89$} & $77.11${\scriptsize $\pm1.27$}\\
SAGPool  & $75.89${\scriptsize$\pm2.91$} & $74.96${\scriptsize $\pm3.60$} & $76.30${\scriptsize $\pm1.53$} & $79.86${\scriptsize $\pm2.36$} & $79.26${\scriptsize $\pm5.37$} & $75.36${\scriptsize$\pm1.82$}\\
\new{EDGEPool} & $75.60${\scriptsize $\pm2.40$} & $67.60${\scriptsize $\pm0.51$} & $77.17${\scriptsize $\pm1.49$} & $70.34${\scriptsize $\pm1.69$} & $75.09${\scriptsize $\pm0.81$} & $75.14${\scriptsize $\pm1.66$} \\
\new{PANPool} & $72.41${\scriptsize $\pm3.58$} & $72.52${\scriptsize $\pm4.05$} & $62.82${\scriptsize $\pm3.04$} & $70.14${\scriptsize $\pm1.85$} & $75.78${\scriptsize $\pm2.03$} & $74.18${\scriptsize $\pm1.52$} \\ \hline
SUM & $74.91${\scriptsize $\pm4.08$} & $78.91${\scriptsize$\pm3.37$} & $76.96${\scriptsize$\pm1.70$} & $80.69${\scriptsize$\pm3.26$} & $80.76${\scriptsize $\pm1.56$} & $74.88${\scriptsize$\pm2.64$}\\
AVG & $73.13${\scriptsize $\pm3.18$} & $76.89${\scriptsize $\pm2.23$} & $73.70${\scriptsize $\pm2.55$} & $80.37${\scriptsize $\pm2.44$} & $81.24${\scriptsize $\pm1.34$} & \textcolor{black}{$\boldsymbol{77.69}${\scriptsize $\pm1.17$}} \\
MAX & $73.57${\scriptsize $\pm3.94$} & $75.80${\scriptsize $\pm4.11$} & $75.96${\scriptsize $\pm1.82$} & $78.83${\scriptsize $\pm1.70$} & $82.28${\scriptsize $\pm2.10$} & $76.95${\scriptsize $\pm0.94$}\\ 
Attention & $73.93${\scriptsize $\pm5.37$} & $77.48${\scriptsize $\pm2.65$} & $74.04${\scriptsize $\pm1.27$} & $80.25${\scriptsize $\pm2.22$} & $81.58${\scriptsize $\pm1.72$} & $77.44${\scriptsize $\pm1.27$}\\ \hline
EGG & \textcolor{black}{$\boldsymbol{77.79}${\scriptsize$\pm2.16$}} & \textcolor{black}{$\boldsymbol{79.10}${\scriptsize$\pm2.98$}} & \textcolor{black}{$\boldsymbol{77.80}${\scriptsize$\pm2.01$}} & \textcolor{black}{$\boldsymbol{81.01}${\scriptsize$\pm1.28$}} & \textcolor{black}{$\boldsymbol{82.94}${\scriptsize$\pm1.06$}} & $76.60${\scriptsize $\pm1.10$}\\ \hline 
\end{tabular}
}
\end{center}
\end{table}
\begin{table}[t]
\caption{Performance comparison for graph \new{classification with \textbf{\textsc{GIN} convolution}}. }
\label{table:poolgin}
\begin{center}
\resizebox{\textwidth}{!}{
\begin{tabular}{lrrrrrrr}
\hline
& \textbf{Proteins} & \textbf{D\&D} & \textbf{NCI1} & \textbf{Mutagen} & \textbf{Collab} & \textbf{Molhiv}\\
\hline 
TopKPool & $73.66${\scriptsize $\pm6.00$} & $76.40${\scriptsize $\pm2.32$} & $77.06${\scriptsize $\pm0.90$} & $78.30${\scriptsize $\pm1.36$} & $81.40${\scriptsize $\pm0.94$} & $78.14${\scriptsize $\pm0.62$}\\
SAGPool & $75.95${\scriptsize $\pm4.52$} & $68.94${\scriptsize $\pm7.62$} & $76.97${\scriptsize $\pm2.94$} & $78.86${\scriptsize $\pm1.58$} & $81.76${\scriptsize $\pm1.57$} & $75.26${\scriptsize $\pm2.29$}\\
\new{EDGEPool} & $75.13${\scriptsize $\pm3.62$} & $72.82${\scriptsize $\pm1.40$} & $77.79${\scriptsize $\pm2.80$} & $81.01${\scriptsize $\pm0.82$} & $79.20${\scriptsize $\pm1.66$}& $75.30${\scriptsize $\pm2.01$}\\
\new{PANPool} & $71.43${\scriptsize $\pm2.15$} & $72.75${\scriptsize $\pm2.32$} & $71.68${\scriptsize $\pm4.45$} & $78.09${\scriptsize $\pm1.27$} & $80.22${\scriptsize $\pm2.02$} & $77.18${\scriptsize $\pm1.13$} \\ \hline
SUM & $78.04${\scriptsize $\pm2.30$} & $78.57${\scriptsize $\pm1.26$} & $78.83${\scriptsize $\pm1.49$} & $81.31${\scriptsize $\pm1.10$} & $82.64${\scriptsize $\pm0.85$} & $77.41${\scriptsize $\pm1.16$}\\ 
AVG & $71.70${\scriptsize $\pm2.08$} & $74.37${\scriptsize $\pm1.32$} & $76.55${\scriptsize $\pm1.72$} & $80.97${\scriptsize $\pm1.18$} & \textcolor{black}{$\boldsymbol{83.30}${\scriptsize$\pm0.77$}} & \textcolor{black}{$\boldsymbol{78.21}${\scriptsize$\pm0.90$}} \\
MAX & $76.70${\scriptsize $\pm1.57$} & $77.31${\scriptsize $\pm2.06$} & $79.27${\scriptsize $\pm1.38$} & $80.28${\scriptsize $\pm0.83$} & $80.94${\scriptsize $\pm0.72$} & $78.16${\scriptsize $\pm1.33$}\\ 
Attention & $75.63${\scriptsize $\pm1.13$} & $71.76${\scriptsize $\pm3.26$} & $78.22${\scriptsize $\pm1.32$} & $78.54${\scriptsize $\pm5.37$} & $83.22${\scriptsize $\pm0.30$} & $74.44${\scriptsize $\pm2.12$}\\ \hline
EGG & \textcolor{black}{$\boldsymbol{79.80}${\scriptsize$\pm1.09$}} & \textcolor{black}{$\boldsymbol{81.18}${\scriptsize$\pm1.14$}} & \textcolor{black}{$\boldsymbol{81.31}${\scriptsize$\pm1.55$}} & \textcolor{black}{$\boldsymbol{82.53}${\scriptsize$\pm0.72$}} & $81.32${\scriptsize $\pm0.68$} & $77.82${\scriptsize $\pm0.90$}\\ \hline 
\end{tabular}
}
\end{center}
\end{table}

\subsubsection{Setup}
We benchmark the performance on five binary classification and one multi-class classification tasks. The \textbf{Molhiv} \cite{hu2020open} is from open graph benchmark, and all other datasets are provided by \textbf{TUDataset benchmarks} \cite{morris2020tudataset}. \new{T}he summary statistics of the six benchmark datasets \new{is provided} in Table~\ref{table:graph descriptive}. \new{Most benchmark datasets preserve their original feature attributes except for} the feature-less dataset \textbf{Collab}, \new{where} we follow \cite{xu2018how} to \new{generate new features with} one-hot encoding of node degrees. \new{Also}, virtual nodes \cite{ishiguro2019graph} are included in \textbf{Molhiv} to enhance the learning ability\new{, as is suggested by the authors of \cite{hu2020open}}.

\new{To learn the hidden representation of graph topological embedding for pooling layers, we consider two variants of \textsc{GCN} \cite{kipf2016semi} and \textsc{GIN} \cite{xu2018how}. For the first five datasets from \textbf{TU Dataset}s, we construct three \textsc{GCN} layers for \textbf{NCI1}, and} two \textsc{GCN} layers \new{for the other four datasets to encode graph hidden representation for pooling. Meanwhile, the number of \textsc{GIN} convolutional layers is set to four with the \textsc{JKNet} \cite{xu2018representation} construction. Both models for the largest dataset \textbf{Molhiv}} use four convolutional layers. \new{The learned hidden representation is sent to one of the baselines, following a} two-layer MLP with size $64$ and $16$\new{, respectively}.

\new{A fair comparison of \textsc{Egg} is made against four hierarchical and four global pooling methods. The former includes} \textsc{TopKPool} \cite{cangea2018towards,gao2019graph}, \textsc{SAGPool} \cite{lee2019self} \new{\textsc{EDGEPool} \cite{diehl2019towards,diehl2019edge}, and \textsc{PANPool} \cite{ma2020path}, and the latter chooses} \textsc{Attention} \cite{li2016gated},  \new{\textsc{Summation}, \textsc{Average} and \textsc{Maximization} methods. The official implementation are of all eight baselines are provided by \texttt{PyTorch Geometric} \cite{fey2019fast}}. 

\new{The models are trained on $80\%$ randomly selected samples of the datasets, validated on $10\%$ samples, and tested on the left $10\%$ samples. The main hyper-parameters are fine-tuned with a grid search engine, where we are interested in Learning rate in $\{5\mathrm{e}{-3}, 1\mathrm{e}{-3}, 5\mathrm{e}{-4\}}$, $L_2$ weight decay in $\{5\mathrm{e}{-3}, 1\mathrm{e}{-3}, 5\mathrm{e}{-4}\}$, and hidden units in $\{32, 64\}$ for the convolution layers. For the fully-connected layer, we search in particular the dropout ratio in $\{0, 0.5\}$. For \textsc{Egg}, we include an extra hyper-parameter of the threshold ratio $r$ in truncated SVD, which is set to one of $\{0.5, 0.8\}$. The model stops training whenever} the validation loss stops improving for $20$ consecutive epochs, or the \new{training epoch reaches $200$.}

\subsubsection{Result}
Table~\ref{table:poolgcn} \new{and Table~\ref{table:poolgin}} compare the prediction performance of \textsc{Egg} with \new{the eight} baselines. \new{Follow the convention, w}e report \new{the percentage value of} mean test accuracy for \new{the classification tasks with \textbf{TUDataset}s} and ROC-AUC score for \textbf{Molhiv}. \new{The mean performance score} are averaged over $10$ repetitions \new{with their standard deviations provided after the $\pm$ signs}. \new{In general, our} \textsc{Egg} achieves the top score \new{on all the six tasks} with a lower volatility. The advantage is more salient \new{when the hidden representation of graphs are embedded by \textsc{GIN} convolutions with \textsc{JKNet} structure. \textsc{Egg} constructs a feasible global pooling method that interprets the second-order correlation of the compressed graph expressions, which are more informative than the first-order relationships. As this covariance relationship is captured by a non-linear transformation, i.e., a truncated SVD, one layer of \textsc{Egg} is generally sufficient for graph distilling tasks, and it reliefs the burden of the fine-tuning work in training a deep graph learning model. It should be emphasized that the correlation analysis of \textsc{Egg} relies on the node attributes. When the input graph is feature-less, such as \textbf{Collab}, the performance can be less-promising.}

\begin{table}[t]
\caption{Summary of the datasets for node \new{clustering} tasks.}
\label{tab:stats:node_clustering}
\begin{center}
\resizebox{0.85\textwidth}{!}{
\begin{tabular}{lrrrrr}
\hline
& \textbf{Cora} & \textbf{Citeseer} & \textbf{Pubmed} & \textbf{Wiki-CS} & \textbf{Coauthor-CS}\\ \hline
\# Nodes & $2,708$ & $3,327$ & $19,717$ & $11,701$ & $18,333$\\
\# Edges & $5,429$ & $4,732$ & $44,338$ & $216,123$ & $100,227$ \\
\# Features & $1,433$ & $3,703$ & $500$ & $300$ & $6,805$ \\
\# Clusters & $7$ & $6$ & $3$ & $10$ & $15$ \\
\hline
\end{tabular}
}
\end{center}
\end{table}

\subsection{Node Clustering}
\label{sec:expClustering}
\subsubsection{Setup}
\new{The second experiment validates the design of} \textsc{Egg} to node clustering tasks. \new{Five popular benchmarks, including the three citation networks (\textbf{Cora}, \textbf{Citeseer}, \textbf{Pubmed}) \cite{yang2016revisiting}, \textbf{Wiki-CS} \cite{mernyei2020wiki} and \textbf{Coauthor-CS} \cite{shchur2018pitfalls}, are employed to examine the effectiveness of an additional embedding step of \textsc{Egg}}. The statistics of the \new{five} datasets are summarized in Table~\ref{tab:stats:node_clustering}. In terms of the baseline method, we train a \textsc{VGAE} model \cite{kipf2016variational} to generate the latent representation $\mH\in\R^{n\times m}$ of the graph and then send the $\mH$ matrix to $k$-means \cite{lloyd1982least} for clustering. Based on this baseline architecture, we insert our \textsc{Egg} before the $k$-means as an additional step. Specifically, we first send the learned latent representation $\mH$ to \textsc{Egg}, then perform $k$-means on the output of the \textsc{Egg} procedure.

\new{T}he same structure of the \textsc{VGAE} model \new{is adopted} from \cite{kipf2016variational}, where the encoder is implemented by a two-layer \textsc{GCN} \cite{kipf2016semi} and the decoder is simply given by an inner product between the latent variables as illustrated in Section~\ref{sec:app:clustering}. We train the \textsc{VGAE} model for $200$ epochs using \textsc{Adam} \cite{kingma2014adam} with a learning rate of $0.01$. The dimensions of the hidden layer and the latent space are set to $32$ and $16$ in all experiments, respectively. \new{To obtain the latent representation of nodes, \textsc{VGAE} is trained on a link prediction task of identifying edges and non-edges. Following \cite{kipf2016variational}, we divide the dataset into the training, validation and test sets by a random selection of $85\%$, $5\%$ and $10\%$ of the total node connections. The positive and negative edges share the same amount.} In terms of evaluation, we employ the following four metrics to validate the clustering results: Accuracy (Acc.), Normalized Mutual Information (NMI), Average Rand Index (ARI), and Completeness Score (CS).

\begin{table}[t]
\caption{Performance comparison for node clustering. All scores are averaged over 10 repetitions with the scale \new{between 0 and} 1. The value after $\pm$ is standard deviation.}
\label{table:node_clustering}
\begin{center}
\resizebox{\textwidth}{!}{
\begin{tabular}{llcccc}
\hline
& \textbf{Method} & \textbf{Acc.} & \textbf{NMI} & \textbf{ARI} & \textbf{CS} \\ \hline
\multirow{4}{*}{\rotatebox[origin=c]{90}{Cora}}\hspace{0.1cm} 
& $k$-means & $0.6137${\scriptsize $\pm0.0345$} & $0.4459${\scriptsize $\pm0.0190$} & $0.3782${\scriptsize $\pm0.0312$} & $0.4351${\scriptsize $\pm0.0188$}\\
& $k$-means+EGG{\scriptsize $0.2$} ($53.16\%$) & $0.5207${\scriptsize $\pm0.0553$} & $0.3789${\scriptsize $\pm0.0374$} & $0.2967${\scriptsize $\pm0.0475$} & $0.3702${\scriptsize $\pm0.0360$}\\
& $k$-means+EGG{\scriptsize $0.5$} ($90.73\%$) & $0.6195${\scriptsize $\pm0.0340$} & $0.4345${\scriptsize $\pm0.0269$} & $0.3804${\scriptsize $\pm0.0434$} & $0.4292${\scriptsize $\pm0.0257$}\\
& $k$-means+EGG{\scriptsize $0.8$} ($98.76\%$) & $\mathbf{0.6388}${\scriptsize $\pm0.0386$} & $\mathbf{0.4548}${\scriptsize $\pm0.0158$} & $\mathbf{0.3998}${\scriptsize $\pm0.0293$} & $\mathbf{0.4591}${\scriptsize $\pm0.0191$} \\ \hline
\multirow{4}{*}{\rotatebox[origin=c]{90}{Citeseer}} \hspace{0.1cm}
& $k$-means & $0.4347${\scriptsize $\pm0.0341$} & $0.1931${\scriptsize $\pm0.0343$} & $0.1548${\scriptsize $\pm0.0262$} & $0.1942${\scriptsize $\pm0.0358$}\\
& $k$-means+EGG{\scriptsize $0.2$} ($45.28\%$) & $0.4156${\scriptsize $\pm0.0409$} & $0.1794${\scriptsize $\pm0.0259$} & $0.1527${\scriptsize $\pm0.0294$} & $0.1783${\scriptsize $\pm0.0254$}\\
& $k$-means+EGG{\scriptsize $0.5$} ($88.25\%$) & $0.4683${\scriptsize $\pm0.0323$} & $\mathbf{0.2098}${\scriptsize $\pm0.0283$} & $\mathbf{0.1865}${\scriptsize $\pm0.0301$} & $0.2096${\scriptsize $\pm0.0282$}\\
& $k$-means+EGG{\scriptsize $0.8$} ($97.98\%$) & $\mathbf{0.4698}${\scriptsize $\pm0.0284$} & $0.2071${\scriptsize $\pm0.0235$} & $0.1859${\scriptsize $\pm0.0282$} & $\mathbf{0.2107}${\scriptsize $\pm0.0201$} \\ \hline
\multirow{4}{*}{\rotatebox[origin=c]{90}{Pubmed}} \hspace{0.1cm}
& $k$-means & $0.6469${\scriptsize $\pm0.0232$} & $0.2425${\scriptsize $\pm0.0233$} & $0.2380${\scriptsize $\pm0.0368$} & $0.2393${\scriptsize $\pm0.0233$}\\
& $k$-means+EGG{\scriptsize $0.2$} ($64.98\%$) & $0.6339${\scriptsize $\pm0.0193$} & $0.2311${\scriptsize $\pm0.0162$} & $0.2196${\scriptsize $\pm0.0228$} & $0.2280${\scriptsize $\pm0.0163$}\\
& $k$-means+EGG{\scriptsize $0.5$} ($97.18\%$) & $0.6296${\scriptsize $\pm0.0228$} & $0.2394${\scriptsize $\pm0.0252$} & $0.2171${\scriptsize $\pm0.0302$} & $0.2377${\scriptsize $\pm0.0263$}\\
& $k$-means+EGG{\scriptsize $0.8$} ($99.48\%$) & $\mathbf{0.6521}${\scriptsize $\pm0.0135$} & $\mathbf{0.2532}${\scriptsize $\pm0.0201$} & $\mathbf{0.2436}${\scriptsize $\pm0.0231$} & $\mathbf{0.2521}${\scriptsize $\pm0.0224$} \\ \hline
\multirow{4}{*}{\rotatebox[origin=c]{90}{\new{Wiki-CS}}} \hspace{0.1cm}
& $k$-means & $0.4080${\scriptsize $\pm0.0371$} & $0.3429${\scriptsize $\pm0.0157$} & $0.2156${\scriptsize $\pm0.0240$} & $0.3418${\scriptsize $\pm0.0153$}\\
& $k$-means+EGG{\scriptsize $0.2$} ($52.48\%$) & $0.3875${\scriptsize $\pm0.0392$} & $0.3207${\scriptsize $\pm0.0183$} & $0.2065${\scriptsize $\pm0.0276$} & $0.3195${\scriptsize $\pm0.0179$} \\
& $k$-means+EGG{\scriptsize $0.5$} ($93.21\%$) & $0.4103${\scriptsize $\pm0.0253$}& $0.3319${\scriptsize $\pm0.0132$} & $0.2232${\scriptsize $\pm0.0249$} & $0.3307${\scriptsize $\pm0.0128$} \\
& $k$-means+EGG{\scriptsize $0.8$} ($99.43\%$) & $\mathbf{0.4685}${\scriptsize $\pm0.0210$}& $\mathbf{0.3643}${\scriptsize $\pm0.0193$} & $\mathbf{0.2423}${\scriptsize $\pm0.0223$} & $\mathbf{0.3633}${\scriptsize $\pm0.0181$} \\ \hline
\multirow{4}{*}{\rotatebox[origin=c]{90}{\new{Coauthor-CS}}} \hspace{0.1cm}
& $k$-means & $0.6410${\scriptsize $\pm0.0171$} & $0.6950${\scriptsize $\pm0.0311$} & $0.5442${\scriptsize $\pm0.0427$}& $0.6880${\scriptsize $\pm0.0300$} \\
& $k$-means+EGG{\scriptsize $0.2$} ($45.05\%$) & $0.5483${\scriptsize $\pm0.0250$} & $0.6095${\scriptsize $\pm0.0210$} & $0.4472${\scriptsize $\pm0.0269$} & $0.6050${\scriptsize $\pm0.0235$} \\
& $k$-means+EGG{\scriptsize $0.5$} ($92.67\%$) & $0.5878${\scriptsize $\pm0.0188$} & $0.6750${\scriptsize $\pm0.0150$} & $0.5033${\scriptsize $\pm0.0190$} & $0.6735${\scriptsize $\pm0.0157$}\\
& $k$-means+EGG{\scriptsize $0.8$} ($99.68\%$) & $\mathbf{0.6700}${\scriptsize $\pm0.0201$} & $\mathbf{0.7248}${\scriptsize $\pm0.0195$} & $\mathbf{0.6150}${\scriptsize $\pm0.0205$} & $\mathbf{0.7230}${\scriptsize $\pm0.0193$} \\ \hline
\end{tabular}}
\end{center}
\end{table}

\subsubsection{Result}
Table~\ref{table:node_clustering} \new{reports the experimental results of \textsc{Egg}-enhanced $k$-means with \textsc{VGAE} in node clustering tasks. In addition to the four evaluation metrics, we also attach two additional ratios in the name of $k$-means+EGG$_x$ ($y$), where the first ratio} $x\in(0, 1)$ is used to \new{determine} the number of most important components $p$ in the latent space of $\mH\in\R^{n\times m}$ we have kept, that is $p=\ceil{xm}$. The \new{second} value $y$ indicates the percentage of the information from the latent representation $\mH$ being captured by these $p$ components.

\new{It shows clearly from Table~\ref{table:node_clustering}} that the \new{\textsc{Egg}{\scriptsize 0.8}-enhanced $k$-means} consistently outperforms the plain $k$-means on all the datasets in terms of all four metrics. We \new{also} observe that \textsc{Egg}{\scriptsize 0.5} provide a comparable performance against the baseline, \new{giving} that the first $p=\ceil{0.5m}$ components contains \new{$90\%$ variations} of the \new{graphs'} latent representation. These observations confirm that \textsc{Egg} brings \new{solid} performance gain\new{s to} the node clustering tasks.

\section{Further Investigation}
\label{sec:exp++}
This section validates \textsc{Egg} from three perspectives. \new{T}he sensitivity of the model performance to the newly introduced hyperparameter \new{is tested} in the first part. We then explore the expressiveness of the learned embedding \new{with a t-SNE visualization of the learned hidden representation}. In the last sector, we check the \new{learning behavior of \textsc{Egg} through the} loss curve \new{of training tasks}.

\subsection{Sensitivity to the embedded dimension}
\begin{figure}[t]
    \begin{center}
    \includegraphics[width=0.5\textwidth]{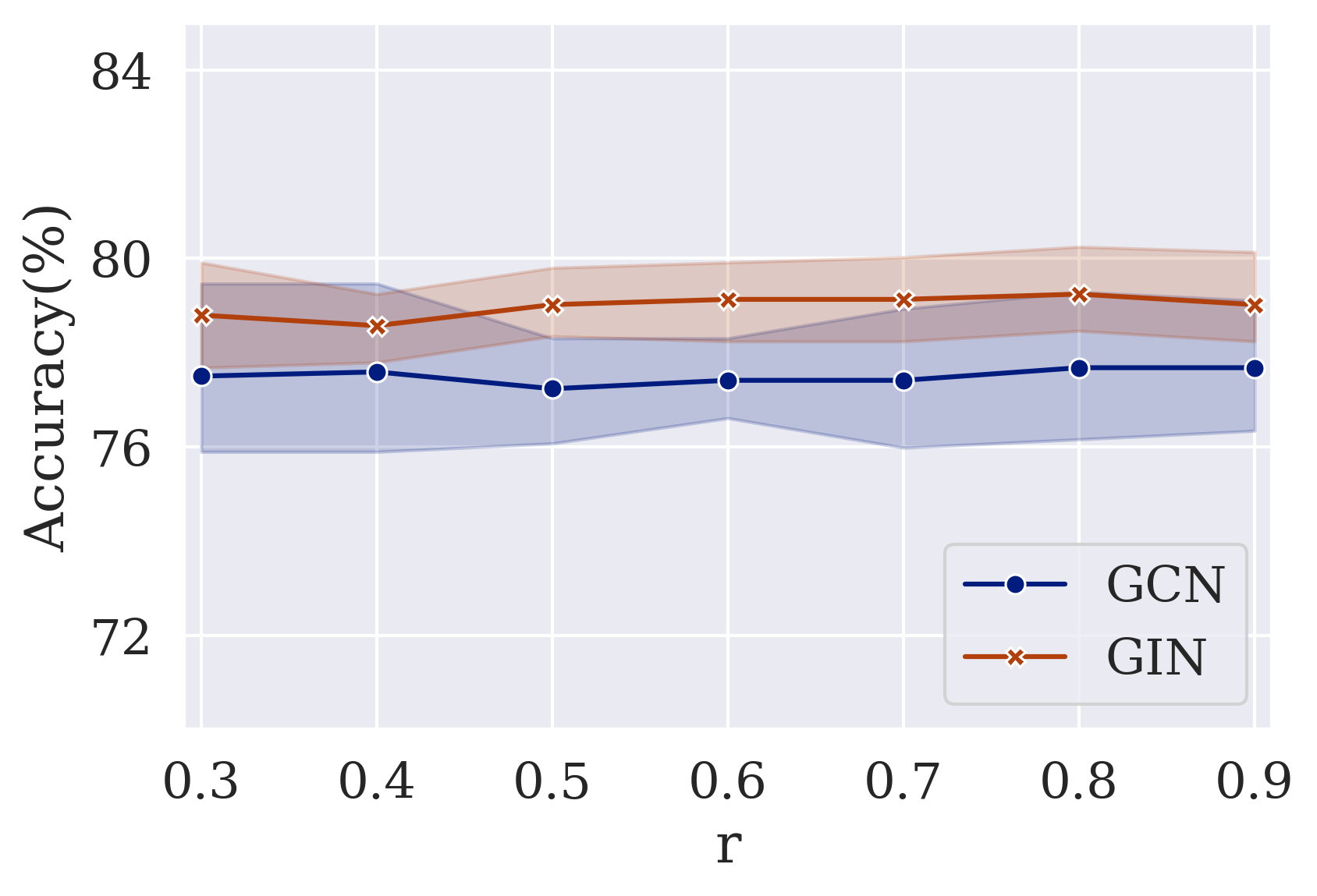}
    \end{center}
    \vspace{-7mm}
    \caption{Sensitivity analysis for the threshold information ratio $r$ on \textbf{PROTEINS}.}
    \label{fig:sensitivity}
\end{figure}
\new{As discussed in Section~\ref{sec:principle}, the number of the subspace dimension $p$ is adaptively selected for each Grassmann point, according to the global threshold of the percentage importance $r$.} The $r$ is \new{thus a new} hyperparameter \new{for} \textsc{Egg}. \new{This section designs a sensitivity analysis to demonstrate the negligible impact of} the choice of $r$ on the performance of the Grassmann embedding. \new{The model learns a graph classification task} on \textbf{Proteins} with both \textsc{GCN} and \textsc{GIN} network\new{s, which} architectures \new{and training setups are detailed in} Section~\ref{sec:expPool}. We report the mean test accuracy over $10$ \new{repetitive runs. Seven different values of the threshold ratio $r$ is set from} $0.3$ to $0.9$ with step size \new{$0.1$}. \new{The results are visualized in Figure~\ref{fig:sensitivity}, which draws a nearly horizontal trend of} the mean test accuracy \new{movement of \textsc{Egg} with different choices of the} threshold ratio $r$. This suggests that \new{the considerably wide choice of the} hyperparameter $r$ \new{does not drastically influence the} performance of \textsc{Egg} pooling. \new{In fact, we suggest} a moderately high value of $r$\new{, such as $0.5$,} to retain the essential information of a graph in Grassmann embedding \new{and guarantee a relatively fast computational speed of the algorithm at the same time}.

\subsection{Embedding Expressiveness}
\begin{figure}[t]
    \centering
    \begin{subfigure}[b]{0.32\linewidth}
        \includegraphics[scale=0.27]{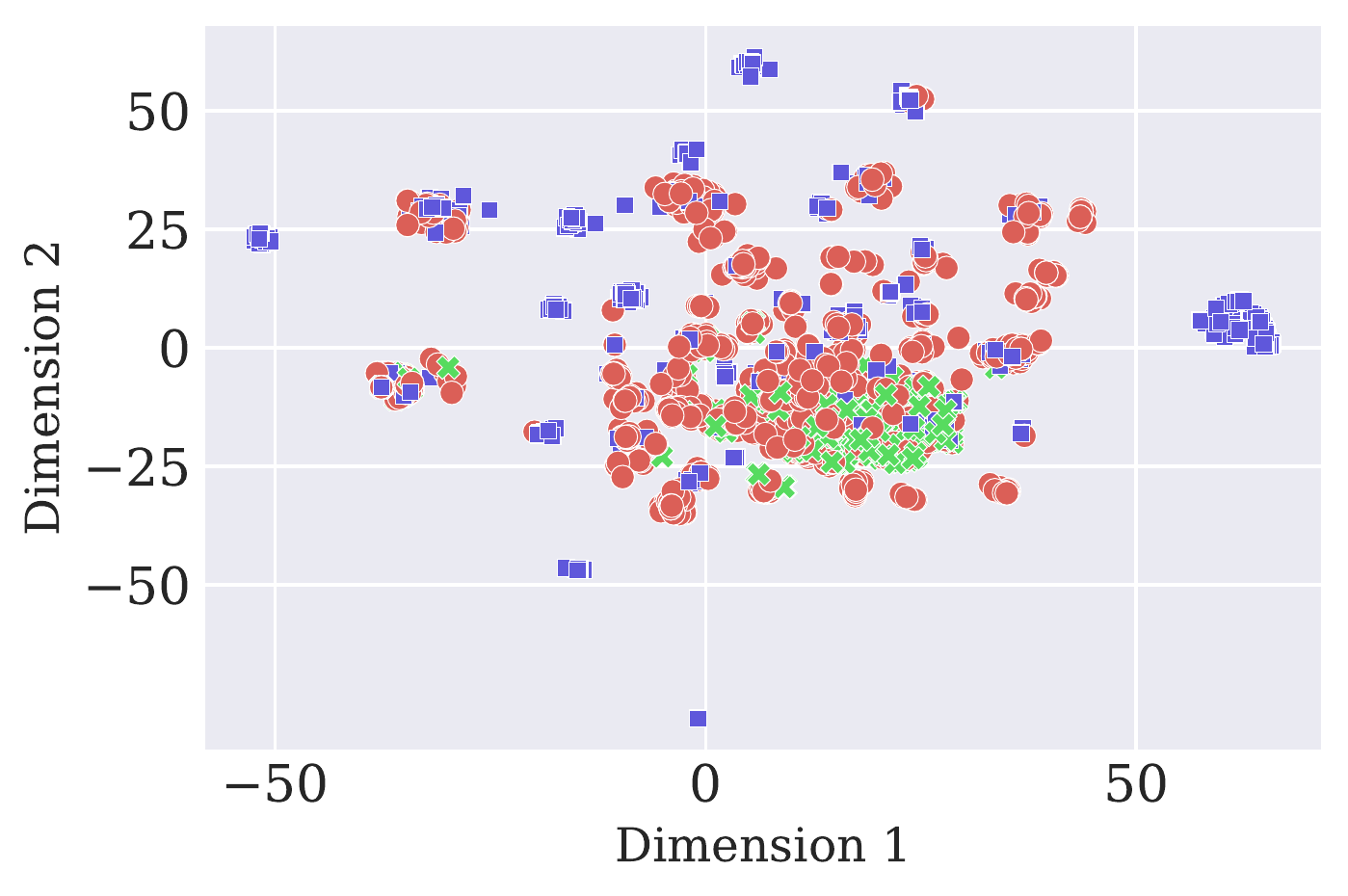}
        \caption{Raw}
        \label{fig:tsne-raw}
    \end{subfigure}
    \begin{subfigure}[b]{0.32\linewidth}
        \includegraphics[scale=0.27]{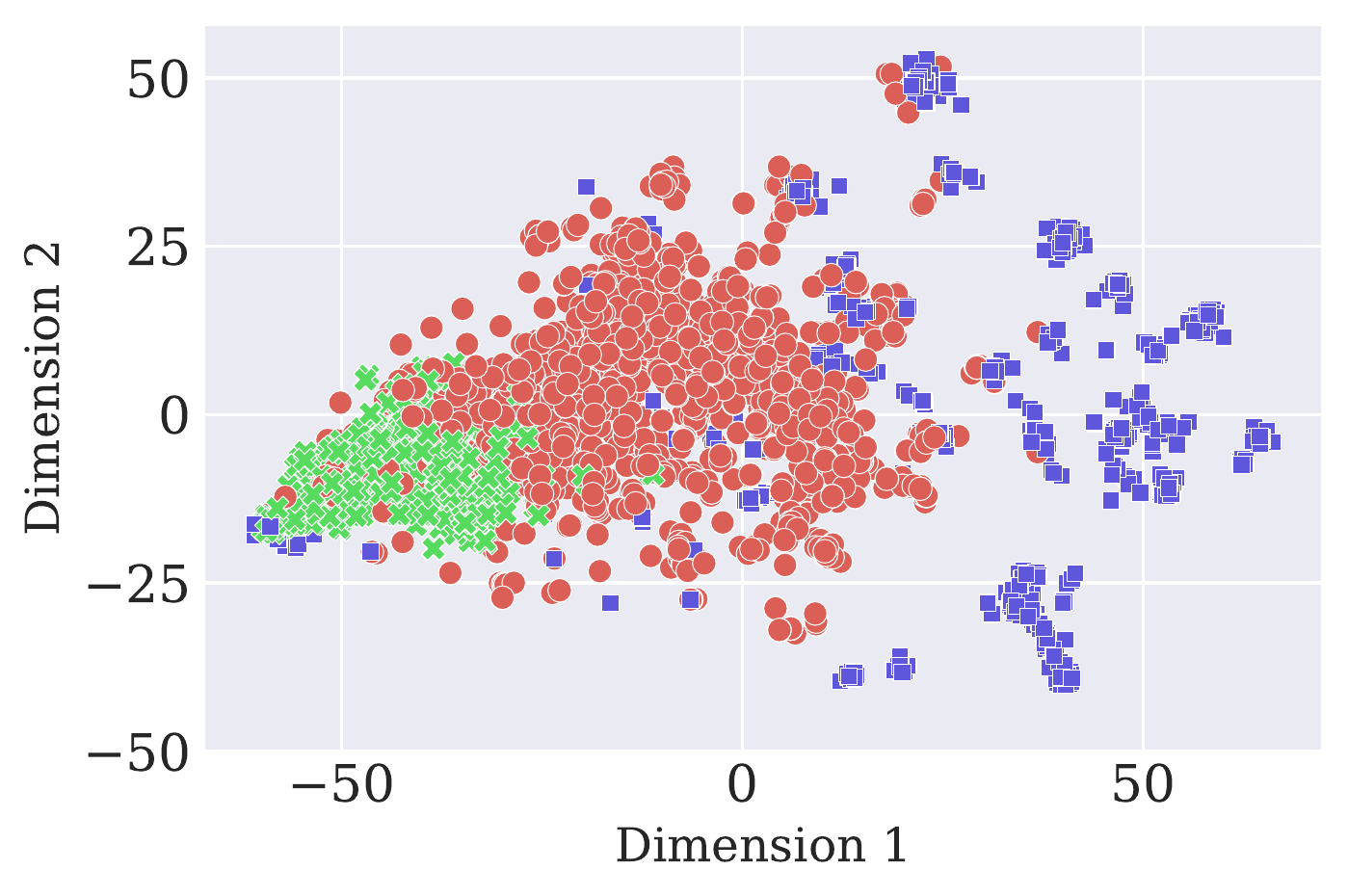}
        \caption{\textsc{GCN--Egg}}
        \label{fig:tsne-gcn}
    \end{subfigure}
    \begin{subfigure}[b]{0.32\linewidth}
        \includegraphics[scale=0.27]{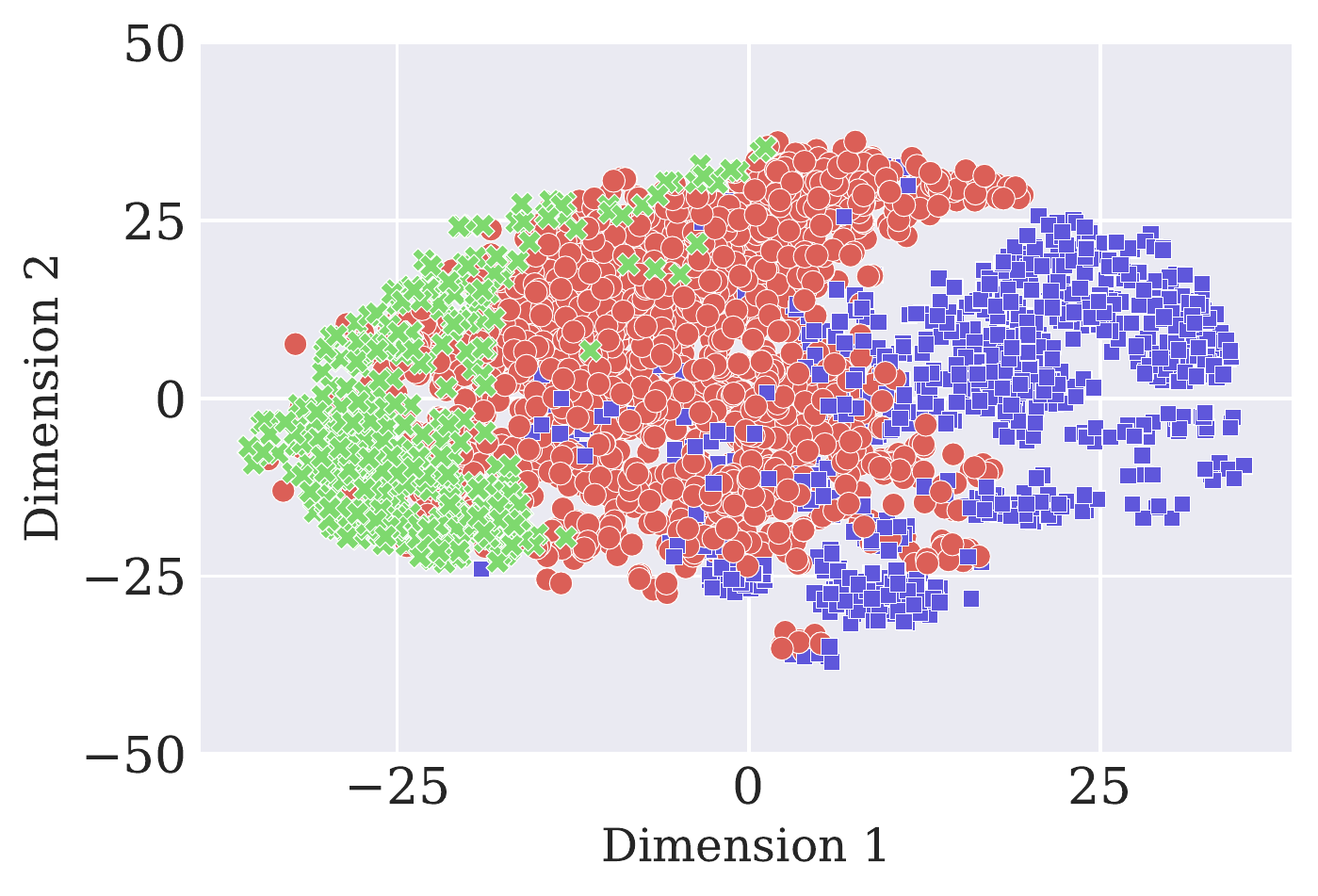}
        \caption{\textsc{GIN--Egg}}
        \label{fig:tsne-gin}
    \end{subfigure}
    \vspace{-2mm}
    \caption{The t-SNE visualizations of graph representations produced by \textsc{Egg} pooling on \textbf{COLLAB} with \textsc{GCN} before training; with \textsc{GCN} after training; and with \textsc{GIN} after training.}
\label{fig:tsne_classification}
\end{figure}
\begin{figure}[t]
    \centering
    \begin{subfigure}[b]{0.32\linewidth}
        \includegraphics[scale=0.27]{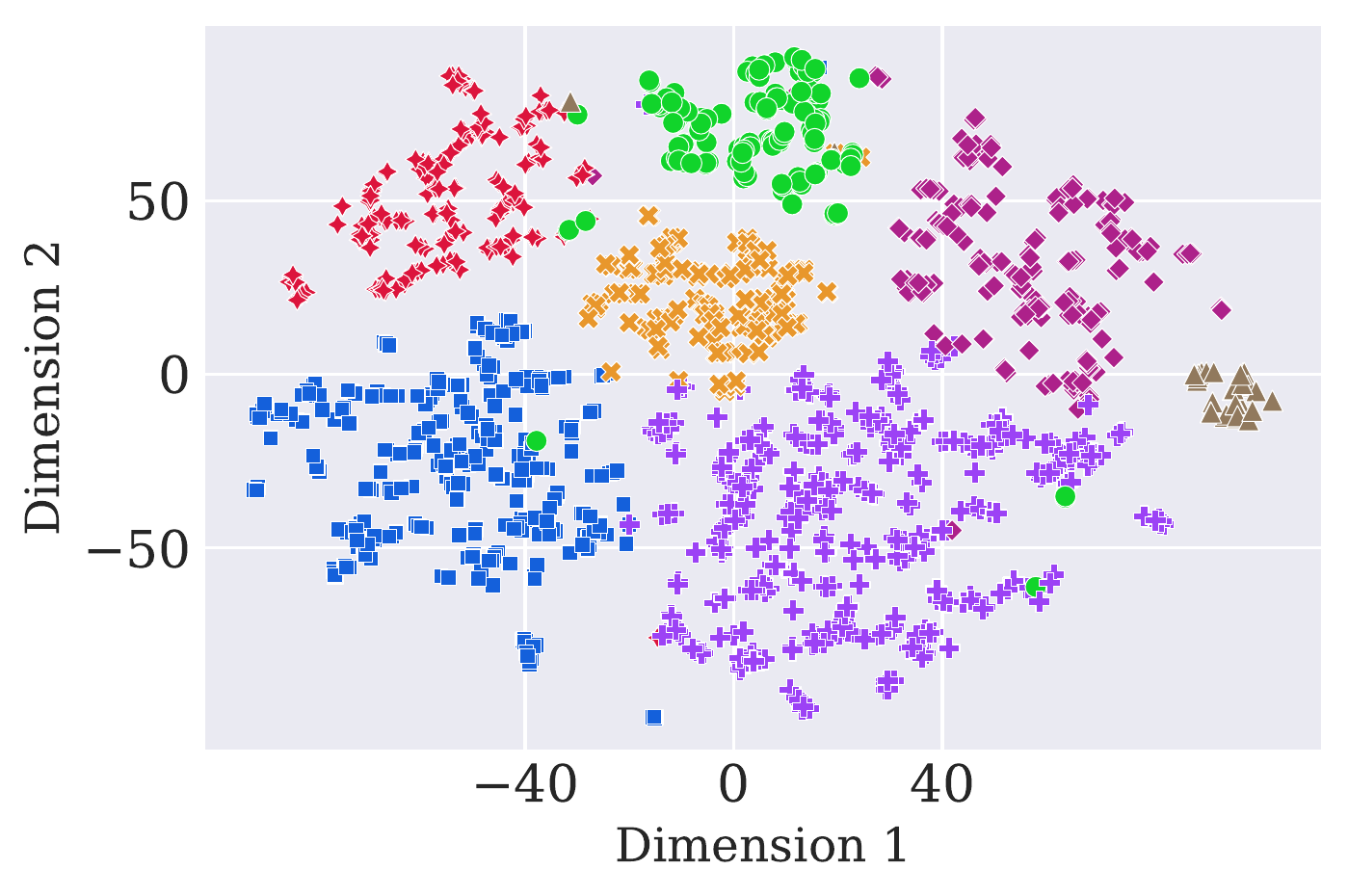}
        \caption{Cora}
    \end{subfigure}
    \begin{subfigure}[b]{0.32\linewidth}
        \includegraphics[scale=0.27]{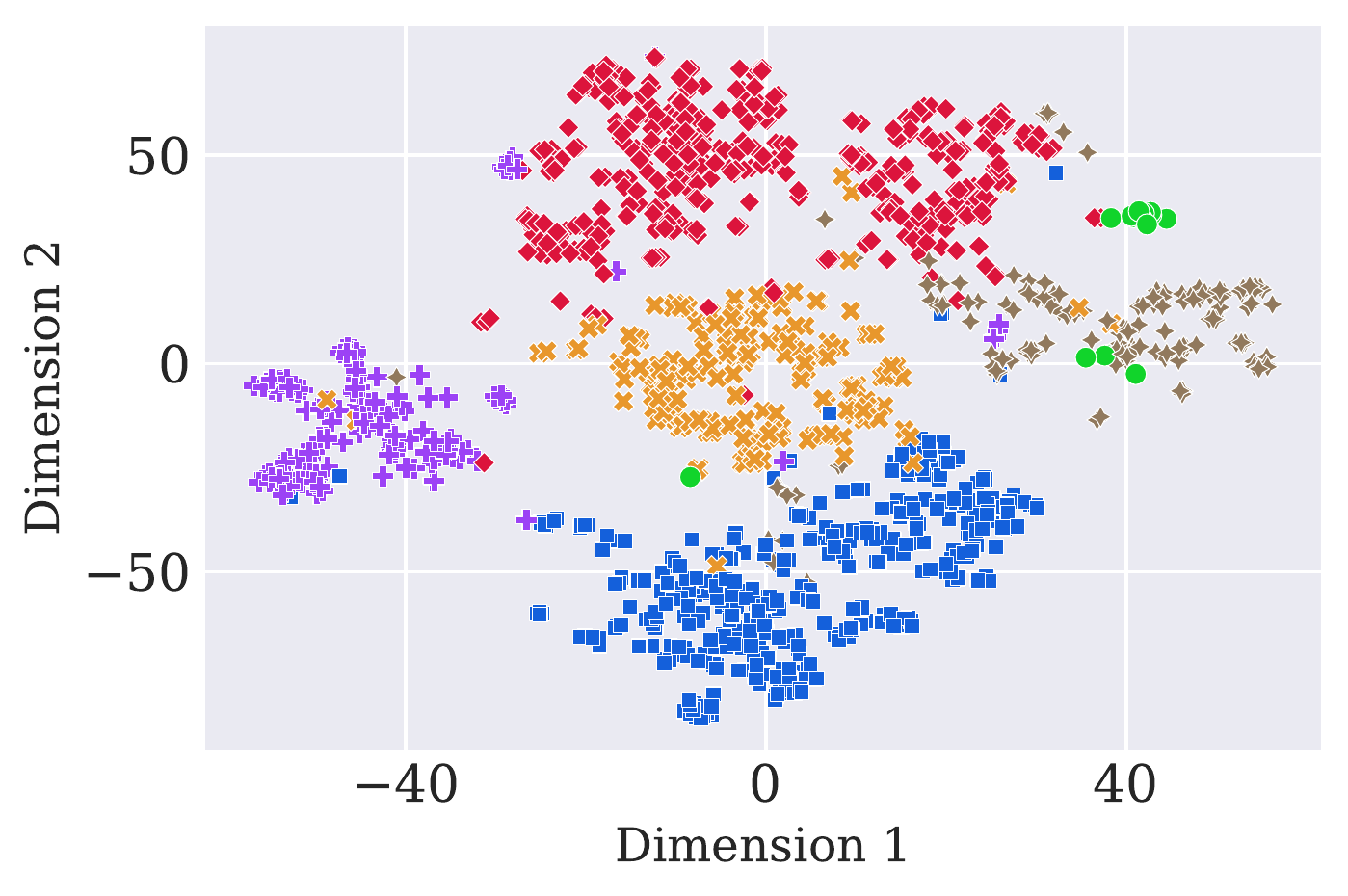}
        \caption{Citeseer}
    \end{subfigure}
    \begin{subfigure}[b]{0.32\linewidth}
        \includegraphics[scale=0.27]{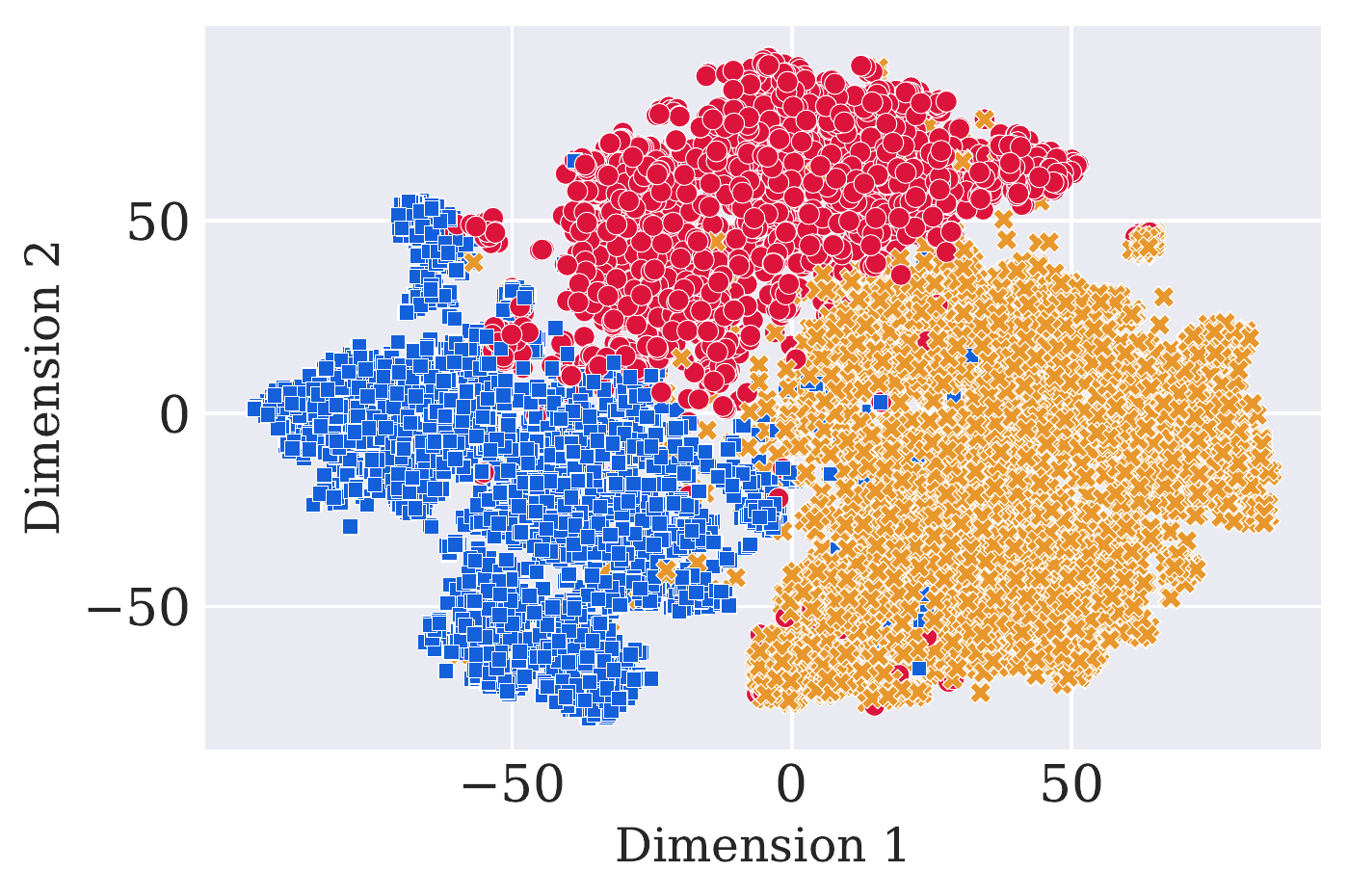}
        \caption{Pubmed}
    \end{subfigure}
    \caption{The t-SNE visualizations of node representations with \textsc{Egg}(0.8)+$k$-means on \textbf{Cora}, \textbf{Citeseer} and \textbf{Pubmed}.}
\label{fig:tsne_clustering}
\end{figure}

\new{Next, we exploit the expressiveness of the flattened Euclidean graph embeddings with} two-dimensional t-distributed Stochastic Neighbor Embedding (t-SNE). \new{The results are from the 3-class graph classification task \textbf{Collab} that we conducted in the first experiment of Section~\ref{sec:exp}. In Figure~\ref{fig:tsne_classification}, e}ach point denotes a graph \new{hidden} representation \new{by} \textsc{Egg}, and \new{the three} color\new{s} indicate \new{one of the three} true labels. \new{For a more clear presentation, we} sample $4,000$ instances \new{in random. In the case of the \textsc{GIN} convolution, outputs from all the four pooling layers are aggregated, due to the employed \textsc{JKNet} structure}. Both Figures~\ref{fig:tsne_classification}(b) and \ref{fig:tsne_classification}(c) suggest a clear clustering pattern of the pooled graphs.

\new{A similar visualization for the learned representation in node clustering tasks is displayed in Figure~\ref{fig:tsne_clustering}, where the three citation networks are trained. Once again, points of different colors are basically located at distinct corners, which implies that the hidden representations from \textsc{Egg}(0.8)+$k$-means manage to collect a differentiable pattern for the underlying clustering task. }

\subsection{Learning Behaviors}
\begin{figure}[t]
    \centering
    \begin{subfigure}[b]{0.8\linewidth}
        \includegraphics[scale=0.35]{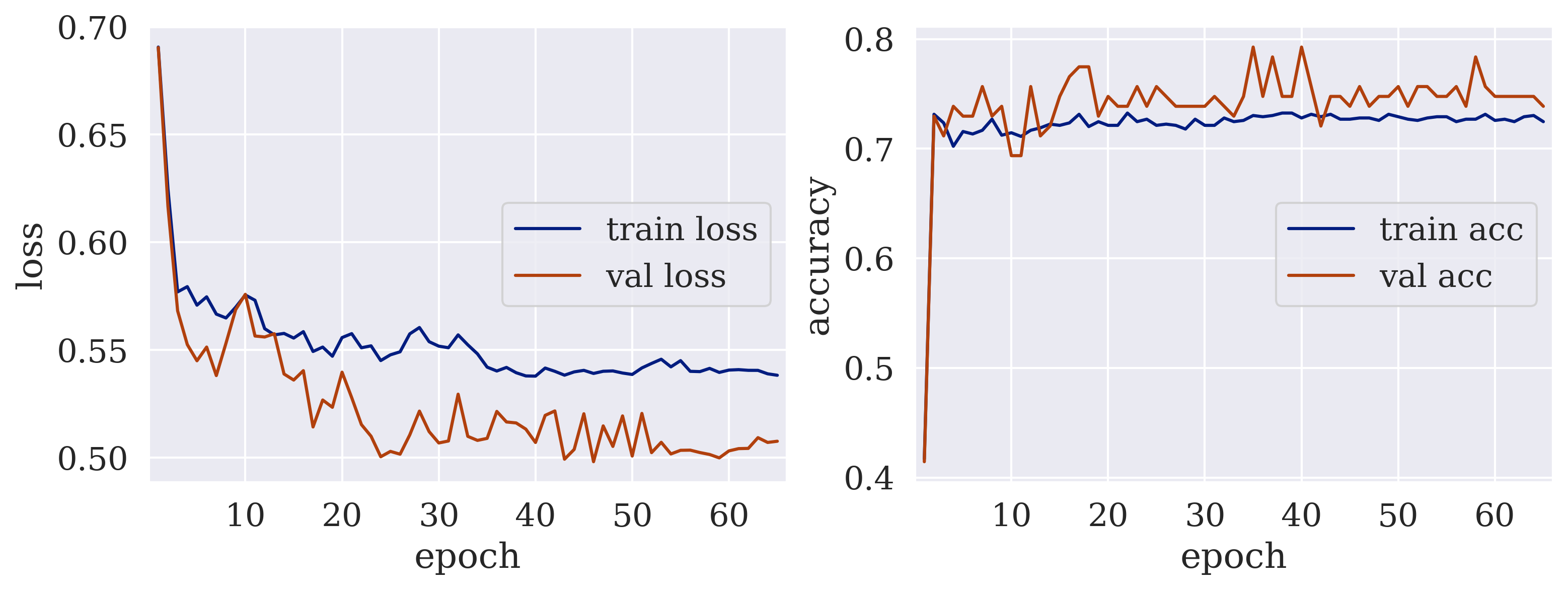}
        \caption{\textsc{GCN} Convolution}
    \end{subfigure}\\ \vspace{4mm}
    \begin{subfigure}[b]{0.8\linewidth}
        \includegraphics[scale=0.35]{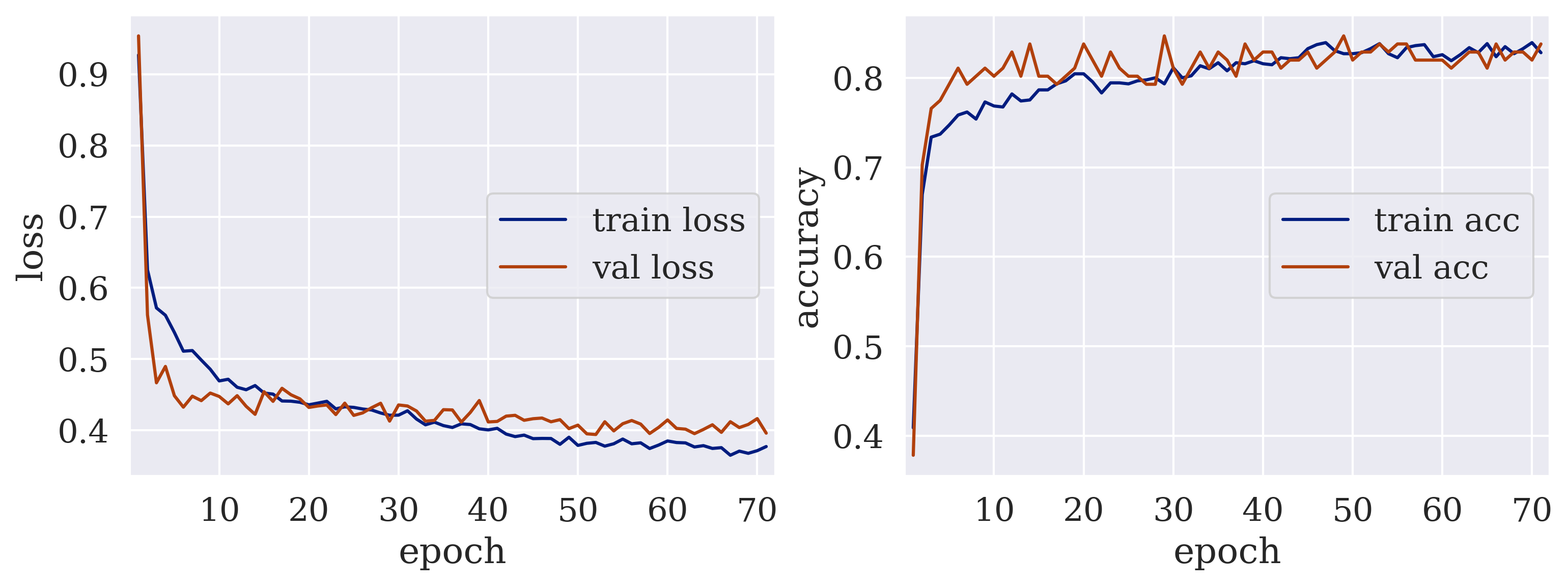}
        \caption{\textsc{GIN} Convolution}
    \end{subfigure}
    \caption{Training and Validation learning curves on the \textbf{Proteins} dataset with \textsc{Egg} pooling. The graph convolutional layers are set to \textsc{GCN} and \textsc{GIN}, respectively.}
\label{fig:lossCurve}
\end{figure}
\new{In the last investigation, we check} the training and validation curves for loss and accuracy in Figure~\ref{fig:lossCurve}. The results are retrieved from \new{the graph classification learning task on} \textbf{Proteins} \new{with the same experimental settings in} Section~\ref{sec:expPool}. \new{Here only a single run results are retrieved} due to the employment of \new{the} early stopping criteria, \new{with which every} independent run \new{could} stop at a \new{different} epoch. \new{Except for t}he minor volatility after epoch $5$ \new{that is very likely} brought about by stochastic gradients\new{, all the} four \new{training lines} validate an efficient convergence of \textsc{Egg}, where the loss curve stabilizes quickly after a few epochs. The training process is slightly longer for \textsc{GIN} convolution, which is partly due to the more sophisticated network architecture for the model to fit.

\section{Discussion and Conclusion}
\new{T}his paper \new{develops a Grassmann geometry-based} graph embedding strategy \new{named \textsc{Egg}}. \new{For a given set of hidden feature subspace of graphs, the proposed method rectifies them to Grassmann points of a Grassmann manifold to} make analysis on them. \new{Through establishing the view of treating graph nodes as a subspace, many new perspectives on formulating informative graph representations become visible. For example, this work approximates the covariance relationship of node attributes with non-linearity transformation, which concurrently offsets the inefficiency of a stack of fully-connected layers and blurs the minor perturbations of the initial representation. Furthermore, the new framework allows a swift projection from the manifold space back to the Euclidean space, so that the new representation supports common loss designs by Euclidean metrics. We demonstrate effectiveness of the embedding framework with extensive numerical experiments, for both graph-level and node-level representation learning tasks. 

The proposed \textsc{Egg} has multiple aspects of potential. As said, it} defines a non-linear \new{transformation routine to graph features, which can be exceptional when} working on complex or massive attributes. Moreover, treating regional graphs as Grassmann points \new{or other entities from non-Euclidean space brings extra flexibility to the model design. We believe this idea could} motivate more links of \new{graph topological learning and other geometric learning schemes}.

\end{document}